\def\figref#1{figure~\ref{#1}}
\def\eqref#1{equation~\ref{#1}}
\def\1{\bm{1}}
\DeclareMathAlphabet{\mathsfit}{\encodingdefault}{\sfdefault}{m}{sl}
\SetMathAlphabet{\mathsfit}{bold}{\encodingdefault}{\sfdefault}{bx}{n}
\newtheorem{lemma}{Lemma}
\newtheorem{proposition}{Proposition}
\newenvironment{customthm}[1]
  {\innercustomthm}
  {\endinnercustomthm}
\title{Residual-MPPI: Online Policy Customization for Continuous Control}
\author[1*]{Pengcheng Wang}
\author[1*]{Chenran Li}
\author[1]{Catherine Weaver}
\author[3]{Kenta Kawamoto}
\author[1]{Masayoshi Tomizuka}
\author[2$\dag$]{Chen Tang}
\author[1]{Wei Zhan}
\begin{document}

% \cortext[cor1]{Equal Contribution.}
% \cortext[cor1]{Corresponding Author.}

\affil[1]{Department of Mechanical Engineering, University of California, Berkeley.}
\affil[2]{Department of Computer Science, University of Texas at Austin.}
\affil[3]{Sony Research Inc., Japan.}
\footnote{$^*$ Equal Contribution \ \ $\dag$ Corresponding Author}

\maketitle

%%%%%%%%%%%%%%%%%%%%%%%%%%%%%%%%%%%%%%%%%%%%%%%%%%%%%%%%%%%%%%%%%%%%%%%%%%%%%%%%
% \setlength{\parskip}{4pt}

\begin{abstract}
% Policies learned through Reinforcement Learning (RL) and Imitation Learning (IL) have demonstrated significant potential in solving continuous control tasks. However, in real-world environments, it is often necessary to further customize a trained policy when there are additional requirements that were unforeseen during the original training phase. It is possible to fine-tune the policy to meet the new requirements, but this often requires collecting new data with the added requirements and access to the original training metric and policy parameters.
Policies developed through Reinforcement Learning (RL) and Imitation Learning (IL) have shown great potential in continuous control tasks, but real-world applications often require adapting trained policies to unforeseen requirements. While fine-tuning can address such needs, it typically requires additional data and access to the original training metrics and parameters.
In contrast, an online planning algorithm, if capable of meeting the additional requirements, can eliminate the necessity for extensive training phases and customize the policy without knowledge of the original training scheme or task. In this work, we propose a generic online planning algorithm for customizing continuous-control policies at the execution time, which we call \textit{Residual-MPPI}. It can customize a given prior policy on new performance metrics in few-shot and even zero-shot online settings, given access to the prior action distribution alone. Through our experiments, we demonstrate that the proposed Residual-MPPI algorithm can accomplish the few-shot/zero-shot online policy customization task effectively, including customizing the champion-level racing agent, Gran Turismo Sophy (GT Sophy) 1.0, in the challenging car racing scenario, Gran Turismo Sport (GTS) environment. Code for MuJoCo experiments is included in the supplementary and will be open-sourced upon acceptance. Demo videos and code are available on our website: \url{https://sites.google.com/view/residual-mppi}. 
\end{abstract}
% \keywords{Policy customization, Combination of learning- and planning-based approaches, Model predictive control}

%%%%%%%%%%%%%%%%%%%%%%%%%%%%%%%%%%%%%%%%%%%%%%%%%%%%%%%%%%%%%%%%%%%%%%%%%%%%%%%%
\section{Introduction}
% RL IL is good. but Policy training is heavy. The real world adds new requirements. find a way to get a new policy without extensive effort becomes needed.
Policy learning algorithms such as Reinforcement Learning (RL) and Imitation Learning (IL) have been widely employed to synthesize parameterized control policies for a wide range of real-world motion planning and decision-making problems~\citep{tang2024deep}, such as navigation~\citep{navigation1, navigation2}, manipulation~\citep{Mani1, Mani2, Mani3, Mani4} and locomotion~\citep{Loco1, Loco2, Loco3}. In practice, real-world applications often impose additional requirements on the trained policies beyond those established during training, which can include novel goals~\citep{goal}, specific behavior preferences~\citep{preference}, and stringent safety criteria~\citep{safety}. Retraining a new policy network whenever a new additional objective is encountered is both expensive and inefficient, as it may demand extensive training efforts. To enable flexible deployment, it is thereby crucial to develop sample-efficient algorithms for synthesizing new policies that meet additional objectives while preserving the characteristics of the original policy~\citep{safety, harmel2023scaling}.

% RQL is good but not online with continuous action, thus still needs training. so bad
Recently,~\citet{RQL} introduced a new problem setting termed \textit{policy customization}, which provides a principled approach to address the aforementioned challenge. In policy customization, the objective is to develop a new policy given a prior policy, ensuring that the new policy: 1) retains the properties of the prior policy, and 2) fulfills additional requirements specified by a given downstream task. As an initial solution, the authors proposed the Residual Q-learning (RQL) framework. For discrete action spaces, RQL can leverage maximum-entropy Monte Carlo Tree Search~\citep{xiao2019maximum} to customize the policy \textit{online}, meaning at the inference time without training. In contrast, for continuous action spaces, RQL offers a solution based on Soft Actor-Critic (SAC)~\citep{haarnoja2018soft} to train a customized policy leveraging the prior policy before the real execution. While SAC-based RQL is more sample-efficient than training a new policy from scratch, the additional training steps that are required may still be expensive and time-consuming.  

In this work, we propose \emph{online policy customization} for \emph{continuous action spaces} that eliminates the need for additional policy training.  We leverage Model Predictive Path Integral (MPPI)~\citep{MPPI}, a sampling-based model predictive control (MPC) algorithm. Specifically, we propose \textit{Residual-MPPI}, which integrates RQL into the MPPI framework, resulting in an online planning algorithm that customizes the prior policy at the execution time. The policy performance can be further enhanced by iteratively collecting data online to update the dynamics model. Our experiments in MuJoCo demonstrate that our method can effectively achieve zero-shot policy customization with a provided offline trained dynamics model. Furthermore, to investigate the scalability of our algorithm in complex environments, we evaluate Residual-MPPI in the challenging racing game, Gran Turismo Sport (GTS), in which we successfully customize the driving strategy of the champion-level racing agent, GT Sophy 1.0~\citep{Sophy}, to adhere to additional constraints.

\begin{figure}[t]
    \label{fig: title pic}
    \centering
    \includegraphics[scale=0.5]{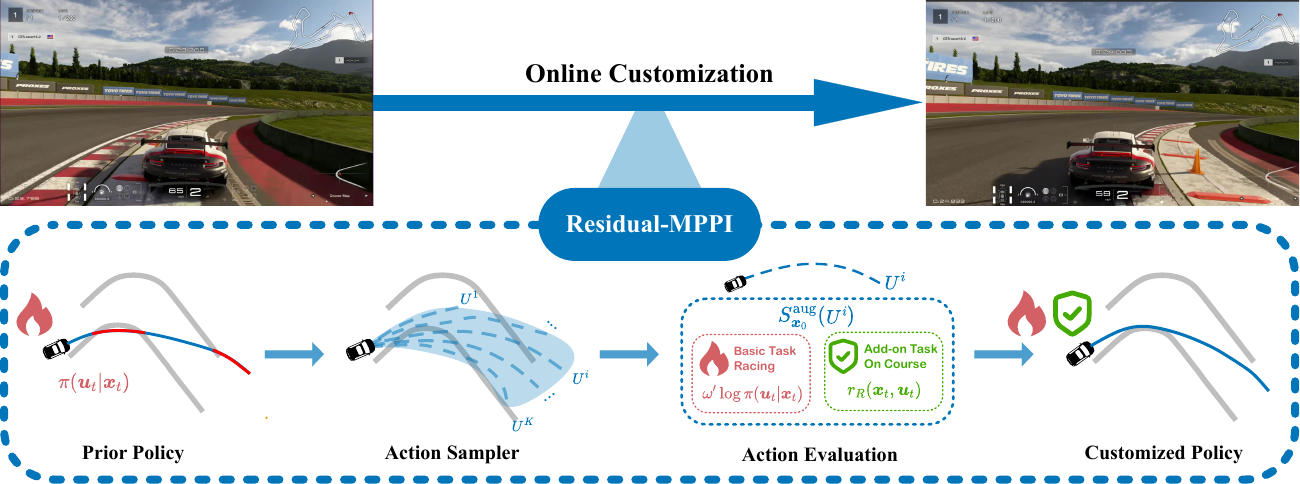}
    \caption{Overview of the proposed algorithm. In each planning loop, we utilize the prior policy to generate samples and then evaluate them through both the log likelihood of the prior policy and an add-on reward to obtain the customized actions. More details are in Sec.~\ref{sec: remppi}. In the experiments, we demonstrate that Residual-MPPI can accomplish the online policy customization task effectively, even in a challenging GTS environment with the champion-level racing agent, GT Sophy 1.0.}
    % \vspace{-1.2em}
\end{figure}

%%%%%%%%%%%%%%%%%%%%%%%%%%%%%%%%%%%%%%%%%%%%%%%%%%%%%%%%%%%%%%%%%%%%%%%%%%%%%%%%%%%%%%%%%%%%%%%%%%%%%%

\section{Preliminaries}
In this section, we provide a concise introduction to two techniques that are used in our proposed algorithm: RQL and MPPI, to establish the foundations for the main technical results.

\subsection{Policy Customization and Residual Q-Learning}\label{sec:RQL}
% The goal of policy customization is to derive a policy from a prior policy that satisfies the new requirements of an add-on task while maintaining the performance of the original task. Thus, it is assumed that a well-trained prior policy is available for policy customization. 
We consider a discrete-time MDP problem defined by a tuple $\mathcal{M} = (\mathcal{X}, \mathcal{U}, r, p)$, where $\mathcal{X} \subseteq \mathcal{R}^n$ is a continuous state space, $\mathcal{U} \subseteq \mathcal{R}^m$ is a continuous action space, $r : \mathcal{X} \times \mathcal{U} \rightarrow \mathbb{R}$ is the reward function, and $p : \mathcal{X} \times \mathcal{U} \times \mathcal{X} \rightarrow [ 0, \infty )$ is the state transition probability density function. The prior policy $\pi: \mathcal{X} \rightarrow \mathcal{U}$ is trained as an optimal maximum-entropy policy to solve this MDP problem. 

Meanwhile, the add-on task is specified by an add-on reward function $r_R : \mathcal{X} \times \mathcal{U} \rightarrow \mathbb{R}$. The full task becomes a new MDP defined by a tuple $\hat{\mathcal{M}} = (\mathcal{X}, \mathcal{U}, \omega r+r_R, p)$, where the reward is defined as a weighted sum of the basic reward $r$ and the add-on reward $r_R$. The policy customization task is to find a policy that solves this new MDP. 
% While the formulation applies whether the reward function $r$ is given or not, we are primarily interested in the case when $r$ remains \emph{unknown} to ensure broader applicability across different prior policies obtained through various methods, including those solely imitating demonstrations.
\citet{RQL} proposed the residual Q-learning framework to solve the policy customization task. Given the prior policy $\pi$, RQL is able to find this customized policy without knowledge of the prior reward $r$, which ensures broader applicability across different prior policies obtained through various methods, including those solely imitating demonstrations. In particular, as shown in their appendix~\citep{RQL}, when we have access to the prior policy $\pi$, finding the maximum-entropy policy solving the full MDP problem $\hat{\mathcal{M}} = (\mathcal{X}, \mathcal{U}, \omega r+r_R, p)$ is equivalent to solving an augmented MDP problem $\mathcal{M}^\mathrm{aug} = (\mathcal{X}, \mathcal{U}, \omega^\prime \log \pi(\boldsymbol{u}|\boldsymbol{x}) + r_R, p)$, where $\omega^\prime$ is a hyper-parameter that balances the optimality between original and add-on tasks.

% introduce MPPI 
\subsection{Model Predictive Path Integral}
Model Predictive Path Integral (MPPI)~\citep{MPPI} is a sampling-based model predictive control (MPC) algorithm, which approximates the optimal solution of an (infinite-horizon) MDP through receding-horizon planning. 
% Considering the aforementioned MDP problem over a finite horizon $T$, at each timestep $t$ we observe state $\boldsymbol{x}_t \in \mathcal{X}$ and pick an action $\boldsymbol{u}_t \in \mathcal{U}$.
% We denote the control inputs, state trajectories by
%         $U = (\boldsymbol {u}_0, \boldsymbol {u}_1, \cdots, \boldsymbol {u}_{T-1})$ and 
%         $X = (\boldsymbol {x}_0, \boldsymbol {x}_1, \cdots, \boldsymbol {x}_{T-1}, \boldsymbol{x}_{T})$.
% Here, we consider the transition probability function in a deterministic form $p(\boldsymbol {x}_t, \boldsymbol {u}_t, \boldsymbol {x}_{t+1}) = \delta(\boldsymbol {x}_{t+1}, F (\boldsymbol {x}_{t}, \boldsymbol {u}_{t}))$
MPPI evaluates the control inputs $U = (\boldsymbol {u}_0, \boldsymbol {u}_1, \cdots, \boldsymbol {u}_{T-1})$ with an accumulated reward function $S_{\boldsymbol{x}_0}(U)$ defined by a reward function $r$ and a terminal value estimator $\phi$:
    \begin{equation}
        S_{\boldsymbol{x}_0}(U) = \sum_{t=0}^{T-1} r(\boldsymbol {x}_t, \boldsymbol {u}_t) + \phi(\boldsymbol {x}_T), \\
    \end{equation}
where the intermediate state $\boldsymbol{x}_t$ is calculated by recursively applying the transition dynamics on $\boldsymbol{x}_0$. 

By applying an addictive noise sequence $\mathcal{E} = (\boldsymbol{\epsilon}_0, \boldsymbol{\epsilon}_1, \dots, \boldsymbol{\epsilon}_{T-1})$ to a nominal control input sequence $\hat{U}$, MPPI obtains a disturbed control input sequence as $U = \hat{U} + \mathcal{E}$ for subsequent optimization, where $\mathcal{E}$ follows a multivariate Gaussian distribution with its probability density function defined as 
$p(\mathcal{E}) = \prod_{t=0}^{T-1} \left((2\pi)^m|\Sigma|\right)^{-\frac{1}{2}} \exp \left( -\frac{1}{2} \boldsymbol{\epsilon}_t^T \Sigma^{-1} \boldsymbol{\epsilon}_t \right)$, where $m$ is the dimension of the action space.

As shown by \cite{MPPI}, the optimal action distribution that solves the MDP is
\begin{equation}
\label{equ: bolzmann}
    q^*(U) = \frac{1}{\eta} \exp \left(\frac{1}{\lambda}S_{\boldsymbol{x}_0}(U)\right) p(\mathcal{E}),
\end{equation}

where $\eta$ is the normalizing constant and $\lambda$ is a positive scalar variable.
MPPI approximates this distribution by assigning an importance sampling weight $\omega(\mathcal{E}^k)$ to each noise sequence $\mathcal{E}^k$ to update the nominal control input sequence:
\begin{equation}
\begin{aligned}
\boldsymbol{u}_t = \hat{ \boldsymbol{u}}_t + \sum_{k=1}^K\omega(\mathcal{E}^k)\boldsymbol{\epsilon}^k_t,
\end{aligned}
\end{equation}
where $K$ is the number of samples, and $\omega(\mathcal{E}^k)$ could be calculated as 
\begin{equation}
\begin{aligned}
\omega(\mathcal{E}^k) = \frac{1}{\mu} \left( S_{\boldsymbol{x}_0}(\hat U + \mathcal{E}^k) - \frac{\lambda}{2} \sum_{t=0}^{T-1} \hat{\boldsymbol{u}}^T_t \Sigma^{-1} (\hat{\boldsymbol{u}}_t + 2\boldsymbol{\epsilon}^k_t) \right),  
\end{aligned}
\end{equation}
where $\mu$ is the normalizing constant.

%%%%%%%%%%%%%%%%%%%%%%%%%%%%%%%%%%%%%%%%%%%%%%%%%%%%%%%%%%%%%%%%%%%%%%%%%%%%%%%%%%%%%%%%%%%%%%%%%%%%%%

\section{Method} \label{sec: remppi}
Our objective is to address the policy customization challenge under the online setting for continuous control. We aim to leverage a pre-trained prior policy $\pi$ with a dynamics model $F$, to approximate the optimal solution to the augmented MDP problem $\mathcal{M}^\mathrm{aug}$, in an online manner. To address this problem, we propose a novel algorithm, \textit{Residual Model Predictive Path Integral} (Residual-MPPI), which is broadly applicable to any maximum-entropy prior policy with a dynamics model. The proposed algorithm is summarized in Algorithm~\ref{algo: ResidualMPPI} and Figure~\ref{fig: title pic}. 
In this section, we first establish the theoretical foundation of our approach by verifying the maximum-entropy property of MPPI. We then refine the MPPI method with the derived formulation to approximate the solution of the $\mathcal{M}^\mathrm{aug}$. 
Lastly, we discuss the dynamics learning and fine-tuning method used in implementation.

\begin{algorithm}[t]
    \caption{Residual-MPPI} 
    \label{algo: ResidualMPPI}
    \textbf{Input:} Current state $\boldsymbol{x}_0$;
    \textbf{Output:} Action Sequence $\hat{U} = (\hat{\boldsymbol{u}}_0, \hat{\boldsymbol{u}}_1, \cdots, \hat{\boldsymbol{u}}_{T-1})$.
    \begin{algorithmic}[1]
     \Require System dynamics $F$; Number of samples $K$; Planning horizon $T$; Prior policy $\pi$; Disturbance covariance matrix $\Sigma$; Add-on reward $r_R$; Temperature scalar $\lambda$; Discounted factor $\gamma$
      \For{$t = 0, \dots, T - 1$}      \Comment{{Initialize the action sequence from the prior policy}\label{alg:InitializeFromPolicy}}
        \State $\hat{\boldsymbol{u}}_t \leftarrow \arg\max \pi(\boldsymbol {u}_t|\boldsymbol {x}_t)$
        \State $\boldsymbol{x}_{t+1} \leftarrow \boldsymbol{F}(\boldsymbol{x}_t, \hat{\boldsymbol{u}}_{t})$
      \EndFor
      \For{$k = 1, \dots, K$}    \Comment{{Evaluate the sampled action sequence}}
        \State Sample noise $\mathcal{E}^k = \{ \epsilon^k_0, \epsilon^k_1, \cdots, \epsilon^k_{T-1} \} $
        \For{$t = 0, \dots, T - 1$}
          \State $\boldsymbol{x}_{t+1} \leftarrow \boldsymbol{F}(\boldsymbol{x}_{t}, \hat{\boldsymbol{u}}_{t} + \epsilon^k_{t})$
          \State $S(\mathcal{E}^k) += \gamma^t \times \left(r_R(\boldsymbol {x}_t, \hat{\boldsymbol{u}}_{t} + \epsilon^k_{t}) + \omega^\prime \log \pi(\hat{\boldsymbol{u}}_{t} + \epsilon^k_{t} | \boldsymbol {x}_{t}) \right) - \lambda \hat{\boldsymbol{u}}^T_{t}\Sigma^{-1}\epsilon^k_{t}$
        \EndFor
      \EndFor
      \State $\beta = \min_k S(\mathcal{E}^k)$  \Comment{{Update the action sequence}}
      \State $\eta \leftarrow \Sigma_{k=1}^{K}\exp\left(  \frac{1}{\lambda}\left(S \left(\mathcal{E}^k \right)- \beta \right)  \right)$ 
      \For{$k = 1, \dots, K$}
        \State $\omega(\mathcal{E}^k) \leftarrow \frac{1}{\eta}  \exp \left( \frac{1}{\lambda}\left(S \left(\mathcal{E}^k \right)- \beta \right)  \right)$
      \EndFor
      \For{$t = 0, \dots, T - 1$}
        \State $\hat{\boldsymbol{u}}_t += \sum_{k=1}^{K} \omega(\mathcal{E}^k)\epsilon^k_t$
      \EndFor
      \State  \Return $U$
    \end{algorithmic}
\end{algorithm}

% \subsection{Max-Entropy Property of MPPI}
\subsection{Residual-MPPI}
\label{sec: memppi}
MPPI is a widely used sampling-based MPC algorithm that has demonstrated promising results in various continuous control tasks. To achieve online policy customization, i.e., solving the augmented MDP $\mathcal{M}^\mathrm{aug}$ efficiently during online execution, we utilize MPPI as the foundation of our algorithm. 

As shown in Sec.~\ref{sec:RQL}, RQL requires the planning algorithm, MPPI, to comply with the principle of maximum entropy. Note that this has been shown in \cite{bhardwaj2020information}, where this result established the foundation for their Q-learning algorithm integrating MPPI and model-free soft Q-learning. In Residual-MPPI, this result serves as a preliminary step to ensure that MPPI can be employed as an online planner to solve the augmented MDP $\mathcal{M}^\mathrm{aug}$ in policy customization. To better serve our purpose, we provide a self-contained and concise notation of this observation in Proposition~\ref{the: MEMPPI}. Its step-by-step breakdown proof can be found in Appendix~\ref{appendix: derivation}. 

\begin{proposition} \label{the: MEMPPI}
Given an MDP defined by $\mathcal{M} = (\mathcal{X}, \mathcal{U}, r, p)$, with a deterministic state transition $p$ defined with respect to a dynamics model $F$ and a discount factor $\gamma = 1$, the distribution of the action sequence $q^*(U)$ at state $\boldsymbol{x}_0$ in horizon $T$, where each action $\boldsymbol{u}_t, t = 0, \cdots, T-1$ is sequentially sampled from the maximum-entropy policy~\citep{Soft-Q} with an entropy weight $\alpha$ is
\begin{equation}
\begin{aligned}
    \label{eq: rewrite action sequence dis}
    q^*(U)
    &=\frac{1}{Z_{\boldsymbol{x}_0}} \exp\left(\frac{1}{\alpha} \left(\sum_{t=0}^{T-1}r(\boldsymbol{x}_t, \boldsymbol{u}_t) + V^*(\boldsymbol{x}_{T})\right)\right), 
\end{aligned}
\end{equation}
where $V^*$ is the soft value function~\citep{Soft-Q} and $\boldsymbol{x}_t$ is defined recursively from $\boldsymbol{x}_0$ and $U$ through the dynamics model $F$  as $\boldsymbol{x}_{t+1} = F(\boldsymbol{x}_t, \boldsymbol{u}_t ), t = 0, \cdots, T-1$.
\end{proposition}

If we have $\lambda = \alpha$ and let $V^*$ be the terminal value estimator, the distribution in \eqref{eq: rewrite action sequence dis} is equivalent to the one in \eqref{equ: bolzmann}, that is the optimal distribution that MPPI tries to approximate, but under the condition that the $p(\mathcal{E})$ is a Gaussian distribution with infinite variance, i.e. a uniform distribution. It suggests that MPPI can well approximate the maximum-entropy optimal policy with a discount factor $\gamma$ close to $1$ and a large noise variance. We can then derive Residual-MPPI straightforwardly by defining the evaluation function $ S_{\boldsymbol{x}_0}(U)$ in MPPI as
\begin{equation}
    S^\mathrm{aug}_{\boldsymbol{x}_0}(U) = \sum_{t=0}^{T-1} \gamma^t \cdot \left(r_R(\boldsymbol {x}_t, \boldsymbol{u}_t) + \omega^\prime \log \pi(\boldsymbol {u}_t | \boldsymbol{x}_t) \right),
\end{equation}
to solve the $\mathcal{M}^\mathrm{aug}$, therefore approximates the optimal customized policy online.

% Note that the $\omega^\prime \log \pi(\boldsymbol {u}_t | \boldsymbol{x}_t)$ from RQL~\citep{RQL} is part of the reward term encoding the information of the basic task, which plays a completely different role with KL-divergence term used for regularizing the distance between the prior policy and the optimal distribution~\citep{bhardwaj2020information}.

The performance and sample efficiency of MPPI depends on the approach to initialize the nominal input sequence $\hat{U}$. In policy customization, the prior policy serves as a natural source for initializing the nominal control inputs. As shown at line~\ref{alg:InitializeFromPolicy} in Algorithm~\ref{algo: ResidualMPPI}, by recursively applying the prior policy, we could initialize a nominal trajectory with a tunable exploration noise to construct a Gaussian prior distribution for sampling. During experiments, we found that including the nominal action sequence as a candidate evaluation sequence can effectively increase sampling stability. 

\subsection{Dynamics Learning and Fine-tuning}
\label{sec: learned dynamics}
% As discussed previously, to rollout the action sequence, an environmental dynamics model is needed. 
In scenarios where an effective dynamics model is unavailable in a prior, it is necessary to develop a learned dynamics model. To this end, we established a dynamics training pipeline utilizing the prior policy for data collection. In our implementation, we employed three main techniques to enhance the model’s capacity for accurately predicting environmental states, as follows. The pseudocode of the complete Residual-MPPI deployment pipeline can be found in Algorithm~\ref{algo: ResidualMPPI Pipeline} in Appendix~\ref{sec: Residual-MPPI Pipeline}.
%Note that this extra sampling process would not leak the knowledge of the customization task, as it is fully carried out with the original setting.

\textbf{Multi-step Error:}
The prediction errors of the imperfect dynamics model accumulate over time steps. In the worst case, compounding multi-step errors can grow exponentially~\citep{venkatraman2015improving}. To ensure the accuracy of the dynamics over the long term, we use a multi-step error $\sum_{t=0}^{T} \gamma^t (s_t - \hat{s_t})^2$ as the loss for training, where $s_t$ and $\hat{s_t}$ are the ground-truth and prediction.

\textbf{Exploration Noise:}
The prior policy's behavior sticks around the optimal behavior to the prior task, which means the roll-out samples collected using the prior policy concentrated to a limited region in the state space. It limits the generalization performance of the dynamics model under the policy customization setting. Therefore, during the sampling process, we add Gaussian exploration noises to the prior policy actions to enhance sample diversity. 

\textbf{Fine-tuning with Online Data:}
Since the residual-MPPI planner solves the customized task objective, its behavior is different from the prior. Thus, the planner may encounter states not contained in the dynamics training dataset collected by prior policy, on which the learned dynamics could be inaccurate. In this case, we can iteratively collect data with Residual-MPPI and update the dynamics model using the in-distribution data.
We refer to the variant with online dynamics fine-tuning as \emph{few-shot Residual-MPPI} and the variant without fine-tuning as \emph{zero-shot Residual-MPPI}.

%%%%%%%%%%%%%%%%%%%%%%%%%%%%%%%%%%%%%%%%%%%%%%%%%%%%%%%%%%%%%%%%%%%%%%%%%%%%%%%%%%%%%%%%%%%%%%%%%%%%

\section{MuJoCo Experiments}
In this section, we evaluate the performance of the proposed algorithms in different environments selected from MuJoCo~\citep{6386109}. In Sec.~\ref{sec:exp.Setup}, we provide the configurations of our experiments, including the settings of policy customization tasks in different environments, baselines, and evaluation metrics. In Sec.~\ref{sec:exp.Results}, we present and analyze the experimental results. Please refer to the appendices for detailed experiment configurations, implementations, and visualizations.

\subsection{Experiment Setup} \label{sec:exp.Setup}

\textbf{Environments.} In each environment, we design the add-on rewards to illustrate a customized specifications such as behavior preference or additional task objectives in the practical application scenarios. In HalfCheetah and Swimmer, we apply an add-on penalty on the angle of a certain joint, which is a common issue in deployment if the corresponding motor is broken; In Hopper, we apply an extra reward on height; In Ant, we apply an add-on reward for moving along the y-axis. The configurations of environments and training parameters can be found in Appendix~\ref{sec:appendix-Implementation-mujoco}. 

% Baselines: Prior/ Residual-MPPI/ Full-MPPI/ Guided-MPPI
\textbf{Baselines.} In our experiments, we compare the performance of the proposed residual-MPPI algorithm against seven baselines, including the prior policy, four alternative MPPI variants, and two RL-based baselines. Note that except for Greedy-MPPI, the remaining MPPI baselines have access to the underlying reward or value function of the prior policy. These baselines show that Residual-MPPI is still the ideal choice, even with privileged access to additional reward or value information. 

\begin{itemize} [leftmargin=10pt]

\item[-] \textbf{Prior Policy:} 
We utilize SAC to train the prior policy on the basic task for policy customization. At test time, we evaluate its performance on the overall task without customization, which serves as a baseline to show the effectiveness of policy customization.   

\item[-] \textbf{Greedy-MPPI:}
Firstly, we introduce Greedy-MPPI, which samples the action sequences from the prior policy but only optimizes the control actions for the add-on reward $r_R$, i.e., removing the $\log \pi$ reward term of the proposed Residual-MPPI. Through comparison with Greedy-MPPI, we aim to show the necessity and effect of including $\log \pi$ as a reward in Residual-MPPI.

\item[-] \textbf{Full-MPPI:}
Next, we apply the MPPI with no prior on the MDP of the full task (i.e., $\omega r+ r_R$). We aim to compare Residual-MPPI against it to validate that our proposed algorithm can effectively leverage the prior policy to boost online planning. 

\item[-] \textbf{Guided-MPPI:}
Furthermore, we introduce Guided-MPPI, which samples the control actions from the prior policy and also has privileged access to the full reward information, i.e., $\omega r+ r_R$. By comparing it against Guided-MPPI, we aim to show the advantages and the necessity of Residual-MPPI even with granted access to the prior reward.  

\item[-] \textbf{Valued-MPPI:}
While it is not compatible with our problem setting, we further consider the case where the value function of the prior policy is accessible. We construct a variant of Guided-MPPI with this value function as a terminal value estimator like ~\cite{argenson2020model}.
% This limitation causes the optimization objective of this baselie to not align with the true objective.

\item[-] \textbf{Residual-SAC:}
We also include the Residual-SAC as a RL-based baseline to set an upper bound performance of the policy customization task without knowing the prior reward. We used the final checkpoint (4M steps) as well as checkpoints with an equivalent amount of data to dynamics training (200K steps) to illustrate the superior sample efficiency of Residual-MPPI.

\item[-] \textbf{Fulltask-SAC:}
Finally, we report the SAC policy trained on the full task as a reference to set an upper bound for the policy customization task's performance. 
% \chen{Call it a prior policy is weird. It is trained on the full reward.}

\end{itemize}

% \textbf{Metric.} We aim to validate a policy's performance from two perspectives: 1) whether the policy is customized toward the add-on reward, and 2) whether the customized policy still maintains the same level of performance on the basic task. Therefore, we use the average basic reward as the metric to evaluate a policy's performance on the basic task. Meanwhile, we use the average add-on reward and task-specific metrics as the metrics for evaluating a policy's performance for the customized objective. Furthermore, we use the total reward to evaluate the joint optimality of the policy.

\textbf{Metric.} We aim to validate a policy's performance with total reward, i.e., $\omega r + r_R$. When the add-on task contradicts the basic task, the optimal policy may trade off its performance on the basic task to maximize the total reward. Therefore, we further measure the basic and add-on rewards separately to monitor the trade-off achieved by the customized policy, i.e., how well the performance on the basic task is maintained and how much the policy is optimized toward the add-on task.

\subsection{Results and Discussions} \label{sec:exp.Results}

The experimental results, summarized in Table~\ref{tbl: Customization Results}, demonstrate the effectiveness of Residual-MPPI for online policy customization. The ablation results of the planning parameters in MuJoCo and visualization can be found in Appendix~\ref{sec: mujoco ablation} and Appendix~\ref{sec: mujoco visualization}.
Also, we conduct experiments upon the same MuJoCo environments with IL prior policies, whose results are summarized in Table~\ref{tbl: IL Prior Customization Results} in Appendix~\ref{sec: IL results}. Across the tested environments, the customized policies achieve significant improvements over the prior policies in meeting the objectives of the add-on tasks while maintaining a similar level of performance in terms of the basic task objectives. % It validates that the proposed algorithms can tailor the policies for customized requirements while maintaining their original characteristics zero-shotly online.
% 2: Others are bad! Explain why Guided-MPPI and Full-MPPI would fail in most environments (Horizon and Samples)[why guided-mppi is worse --- residual/guided-mppi all with longer horizon]
aIn particular, we observe three key advantages of Residual-MPPI over baseline approaches.

\emph{First, Residual-MPPI demonstrates significantly higher
sample-efficiency over the RL-based baselines}. While Fulltask-SAC and Residual-SAC achieve better performance than planning-based approaches through extensive online exploration (4M steps), our focus is on online policy customization. Thus, we consider their performance as upper bounds in our online setting. With a substantially smaller dataset for dynamics model training (2K steps), Residual-MPPI achieves total rewards comparable to Residual-SAC in the HalfCheetah, Hopper, and Ant environments. In contrast, Residual-SAC performs significantly worse when trained on the same limited amount of data.
\begin{table}[t]
\caption{Experimental Results of Zero-shot Residual-MPPI in MuJoCo} \label{tbl: Customization Results}

\resizebox{\textwidth}{!}{

\begin{threeparttable}
\begin{tabular}{@{}clcccc@{}}
\toprule
\multirow{2}{*}{Env.} & \multicolumn{1}{c}{\multirow{2}{*}{Policy}} & Full Task  & {Basic Task} & \multicolumn{2}{c}{Add-on Task} \\ \cmidrule(l){3-6}
 & & $\mathrm{Total\ Reward}$ & $\mathrm{Basic\ Reward}$ &  $\bar{|\theta|}$ & $\mathrm{Add}$-$\mathrm{on\ Reward}$ \\ \midrule
 
\multirow{9}{*}{\begin{tabular}[c]{@{}c@{}}Half\\ Cheetah\end{tabular}} 
& Prior Policy & $1000.7 \pm 88.8$ & $2449.8 \pm 52.3$ &  $0.14\pm 0.00$ &  $-1449.1\pm 45.3$ \\ \cmidrule(l){2-6} 
& Greedy-MPPI & $\mathbf{1939.9 \pm 134.7}$ & $2180.9 \pm 87.3$ & $\mathbf{0.02 \pm 0.01}$ & $\mathbf{-241.0 \pm 50.3}$\\
& Full-MPPI & $-3595.1 \pm 322.7$ & $-1167.3 \pm 144.0$ & $0.24 \pm 0.03$ & $-2427.7 \pm 320.3$  \\
& Guided-MPPI & $1849.6 \pm 151.0$ & $2154.6 \pm 95.7$ & $0.03 \pm 0.01$ & $-305.0 \pm 58.7$ \\
& Valued-MPPI & $1760.7 \pm 478.8$ & $\mathbf{2201.8 \pm 258.3}$ & $0.04 \pm 0.02$ & $-441.0 \pm 222.5$  \\
& Residual-MPPI   & $\mathbf{1936.2 \pm 109.3}$ & $2178.6 \pm 71.9$ & $\mathbf{0.02 \pm 0.00}$ & $\mathbf{-242.3 \pm 40.5}$\\ \cmidrule(l){2-6} 
& Residual-SAC (200K)   & $-265.0 \pm 919.0$ & $455.4 \pm 678.6$ & $0.07 \pm 0.03$ & $720.4 \pm 251.8$\\
& Residual-SAC (4M)   & $2184.5 \pm 29.7$ & $2233.7 \pm 29.3$ & $0.00 \pm 0.00$ & $-49.2 \pm 1.7$\\
& Fulltask-SAC   & $ 2149.9 \pm 28.6 $ & $ 2214.5 \pm 27.2 $ & $ 0.01 \pm 0.00 $ & $ -64.5 \pm 2.4 $\\

 \midrule 
\multicolumn{1}{c}{Env} & \multicolumn{1}{c}{Policy} & $\mathrm{Total\ Reward}$ & $\mathrm{Basic\ Reward}$ & $\bar{|\theta|}$ & $\mathrm{Add}$-$\mathrm{on\ Reward}$ \\ \midrule

\multirow{9}{*}{Swimmer}
& Prior Policy  & $-245.2 \pm 5.6$ &  $345.8 \pm 3.2$ & $0.59 \pm 0.01$ & $-591.0  \pm 5.8$   \\  \cmidrule(l){2-6} 
& Greedy-MPPI & $\mathbf{-58.9 \pm 5.4}$ & $275.8 \pm 3.1$ & $\mathbf{0.33 \pm 0.01}$ & $\mathbf{-334.7 \pm 7.4}$\\
& Full-MPPI  & $-1686.6 \pm 106.7$ & $14.1 \pm 6.3$ & $1.70 \pm  0.11$ & $-1700.7 \pm 106.2$    \\ 
& Guided-MPPI   & $-149.0 \pm 5.6$ & $292.9 \pm 3.8$ & $0.44 \pm  0.01$ & $-441.9 \pm 7.2$    \\
& Valued-MPPI  & $-205.8 \pm 6.3$ & $\mathbf{335.1 \pm 1.6}$ & $0.54 \pm  0.01$ & $-540.9 \pm 6.3$    \\
& Residual-MPPI      & $\mathbf{-60.0 \pm  5.2}$ & $275.8 \pm 3.4$ & $\mathbf{0.34 \pm  0.01}$  & $\mathbf{-335.9 \pm 7.6}$ \\ \cmidrule(l){2-6} 
& Residual-SAC (200K)   & $-209.0 \pm 67.6$ & $2.1 \pm 15.5$ & $0.21 \pm 0.07$ & $-221.1 \pm 72.7$\\
& Residual-SAC (4M)   & $-10.5 \pm 24.1$ & $-1.5 \pm 16.9$ & $0.01 \pm 0.02$ & $-9.0 \pm 16.6$\\
& Fulltask-SAC   & $-4.2 \pm 17.1$ & $ 2.1 \pm 17.6 $ & $ 0.01 \pm 0.00 $ & $ -6.3 \pm 3.0 $\\
\midrule 
                           
\multicolumn{1}{c}{Env.} & \multicolumn{1}{c}{Policy} & $\mathrm{Total\ Reward}$ & $\mathrm{Basic\ Reward}$ & $\bar{z}$ & $\mathrm{Add}$-$\mathrm{on\ Reward}$ \\ \midrule
\multirow{9}{*}{Hopper} 
& Prior Policy & $7252.7 \pm 49.2$ & $3574.5 \pm 9.7$ & $1.37 \pm 0.00$ & $3678.2 \pm 48.3$ \\ \cmidrule(l){2-6} 
& Greedy-MPPI & $\mathbf{7367.0 \pm 199.4}$ & $3553.0 \pm 58.4$ & $\mathbf{1.38 \pm 0.01}$ & $\mathbf{3814.0 \pm 156.8}$\\ 
& Full-MPPI  & $20.5 \pm 3.0$ & $3.6 \pm 0.7$ & $1.24 \pm 0.00$ & $16.9 \pm 2.4$ \\ 
& Guided-MPPI & $6121.3 \pm 1590.1$ & $3067.8 \pm 679.0$ & $1.35 \pm 0.03$ & $3053.4 \pm 917.7$ \\
& Valued-MPPI  & $7243.9 \pm 75.7$ & $\mathbf{3562.7 \pm 14.5}$ & $1.37 \pm 0.01$ & $3681.2 \pm 74.6$ \\
& Residual-MPPI & $\mathbf{7363.0 \pm 254.9}$ & $3547.6 \pm  78.0$ & $\mathbf{1.38 \pm 0.01}$  & $\mathbf{3815.4 \pm 186.4}$  \\ \cmidrule(l){2-6} 
& Residual-SAC (200K)   & $3543.1 \pm 478.9$ & $1019.8 \pm 94.3$ & $1.27 \pm 0.01$ & $2523.2 \pm 405.5$\\
& Residual-SAC (4M)  & $7682.5 \pm 178.2$ & $2310.4 \pm 106.8$ & $1.54 \pm 0.01$ & $5372.0 \pm 75.8$\\
& Fulltask-SAC   & $7825.3 \pm 36.9$ & $2934.5 \pm 27.6$ & $1.49 \pm 0.00$ & $4890.8 \pm 39.6$\\

\midrule             

\multicolumn{1}{c}{Env} & \multicolumn{1}{c}{Policy} & $\mathrm{Total\ Reward}$ & $\mathrm{Basic\ Reward}$ &  $\bar{v}_y$ & $\mathrm{Add}$-$\mathrm{on\ Reward}$ \\ \midrule
\multirow{9}{*}{Ant} 
& Prior Policy & $6333.7 \pm 753.9$  & $6177.1 \pm 703.7$ & $0.16 \pm 0.22$ & $156.6 \pm 200.5$ \\ \cmidrule(l){2-6} 
& Greedy-MPPI & $6104.2 \pm 1532.0$ & $5092.8 \pm 1305.2$ & $\mathbf{1.01 \pm 0.27}$ & $\mathbf{1011.3 \pm 277.7}$\\
& Full-MPPI  & $-2767.7 \pm 154.0$ & $-2764.4 \pm 114.2$ & $-0.00 \pm 0.11$ & $-3.3 \pm 108.0$ \\
& Guided-MPPI   & $5160.9 \pm 1963.0$ & $4999.8 \pm 1887.9$  & $0.16 \pm 0.22$ & $161.2 \pm 217.7$\\
& Valued-MPPI  & $6437.0 \pm 1021.9$ & $\mathbf{6230.7 \pm 959.0}$ & $0.21 \pm 0.20$ & $206.3 \pm 196.3$ \\
& Residual-MPPI  & $\mathbf{6846.7 \pm 647.8}$ & $5984.8 \pm 541.5$ & $\mathbf{0.86 \pm 0.19}$ & $\mathbf{861.8 \pm 189.8}$\\ \cmidrule(l){2-6} 
& Residual-SAC (200K)   & $ -1175.5 \pm 157.3 $ & $ -1178.3 \pm 156.4$ & $ 0.00 \pm 0.00$ & $ 2.7 \pm 3.9 $\\
& Residual-SAC (4M)  & $6962.9 \pm 342.9$ & $5710.2 \pm 252.0$ & ${1.25 \pm 0.13}$ & ${1252.7 \pm 127.3}$\\
& Fulltask-SAC   & $7408.6 \pm 312.0$ & $ 3100.3 \pm 184.4 $ & $ 4.31 \pm 0.21 $ & ${ 4308.3 \pm 209.2}$\\

\bottomrule

\end{tabular}
    \begin{tablenotes}
        \small 
        
        \item The evaluation results are in the form of $\mathrm{mean} \pm \mathrm{std}$ over the 500 running episodes. The total reward is calculated on full task, whose reward is $\omega r + r_R$.
    \end{tablenotes}
\end{threeparttable}
}
% \vspace{-2em}
\end{table}
\emph{Second, Residual-MPPI can strike a better trade-off between the basic and add-on rewards than Greedy-MPPI.} Greedy-MPPI optimizes the objective of a biased MDP $\mathbf{\mathcal{M}^\mathrm{add} = (\mathcal{X}, \mathcal{U}, r_R, p)}$. It relies solely on sampling from the prior policy to heuristically regularize the optimized action sequences to remain close to the prior policy. While this heuristic approach helps maintain performance on the basic task objective, it is prone to suboptimality in tasks where the add-on reward is sparse or orthogonal to the basic reward. For example, in the Ant environment, the agent is rewarded for making progress along the x-axis in the basic task, while the add-on reward encourages progress along the y-axis. Since these reward terms are orthogonal, exclusively optimizing for progress along the y-axis compromises the progress along the x-axis, leading to a noticeable performance gap for Greedy-MPPI. This limitation is further amplified in the more complex racing task, as we will demonstrate in the next section.

\emph{Finally, Residual-MPPI outperforms MPPI variants that have privileged access to the prior policy's reward or value functions.} Despite having full reward knowledge, Full-MPPI fails to complete the task. Without sampling control actions from the prior policy, Full-MPPI suffers from poor sample efficiency and achieves subpar performance. Guided-MPPI can achieve better performance, given its access to action samples from the prior policy. However, it still underperforms Residual-MPPI across all tasks, as it is constrained by its limited ability to reason the long-term effects of actions within its finite planning horizons. Conversely, Residual-MPPI implicitly incorporates the original task reward through the prior policy log-likelihood. The $\log\pi$ term also serves as a proxy of the prior policy's Q-function, informing the planner of the impact of the action sequences beyond the planning horizon. While Valued-MPPI addresses long-term impacts by incorporating the prior Q-function, it struggles to fully align with the complete task reward. Valued-MPPI considers the long-term impact by incorporating the prior Q-function but fails to match the full task reward setting, resulting in better performance on the basic task but suboptimum on the full task.

\section{Customizing Champion-level Autonomous Racing Policy}
% Introduce the importance of experiments in more complex platform GTS.
With the effective results in standard benchmark environments, we are further interested in whether the proposed algorithm can be applied to effectively customize an advanced policy for a more sophisticated continuous control task. Gran Turismo Sport (GTS), a realistic racing simulator that simulates high-fidelity racing dynamics, complex environments at an independent, fixed frequency with realistic latency as in real-world control systems, serves as an ideal test bed.
GT Sophy 1.0~\citep{Sophy}, a DRL-based racing policy, has been shown to outperform the world’s best drivers among the global GTS player community. To further investigate the scalability of our algorithm and its robustness when dealing with complex environments and advanced policies, we carried out further experiments on GT Sophy 1.0 in the GTS environment. 

\subsection{Experiment Setup}
% Point out the drawback features of GT Sophy 1.0
Though GT Sophy 1.0 is a highly advanced system with superhuman racing skills, it tends to exhibit too aggressive route selection as the well-trained policy has the ability to keep stable on the simulated grass. However, in real-world racing, such behaviors would be considered dangerous and fragile because of the time-varying physical properties of real grass surfaces and tires. Therefore, we establish the task as customizing the policy to a safer route selection. Formally, the customization goal is to help the GT Sophy 1.0 drive \emph{on course} while maintaining its superior racing performance. This kind of customization objective could potentially be used to foster robust sim-to-real transfer of agile control policies for many related problem domains. 

% Introduce the experimental setting in GTS(track, car, start condition, dynamics training)
% We select a pre-trained GT Sophy policy with Porsche 911 RSR on a challenging course, Lago Maggiore GP, which has significant elevation changes, multiple sharp turns, and consecutive curves.  
We adopt a simple MLP architecture to design a dynamics model and train it using the techniques introduced in Sec.~\ref{sec: learned dynamics}. 
% Also, due to the need for longer horizons and more diverse samples for route planning to obtain suitable routes, we sampled with a broader approach to enhance the algorithm's performance.
The configurations of the environments and implementation details can be found in Appendix~\ref{sec:appendix-gts}.
As the pre-trained GT Sophy 1.0 policy is constructed with complex rewards and regulations,
%to access the direct performance,
we adopt the average lap time and number of off-course steps as the metrics. In addition to zero-shot Residual-MPPI, we also evaluate a few-shot version of our algorithm, in which we iteratively update the dynamics with the customized planner's trajectories to further improve the performance. We also consider Residual-SAC~\citep{RQL} as another baseline to validate the advantage of online Residual-MPPI against RL-based solutions in the challenging racing problem. Notably, we are only given access to the policy network of GT Sophy 1.0, which prevents us from evaluating Valued-MPPI, as it requires access to the critic function. Therefore, we only include Greedy-MPPI and Guided-MPPI as MPPI variants for comparison in the GTS environment.

% Table Results
%%% Previous Table

\subsection{Results and Discussions}
\label{sec: gtsresults}
\begin{figure}[t]
    \centering
    \includegraphics[scale=0.45]{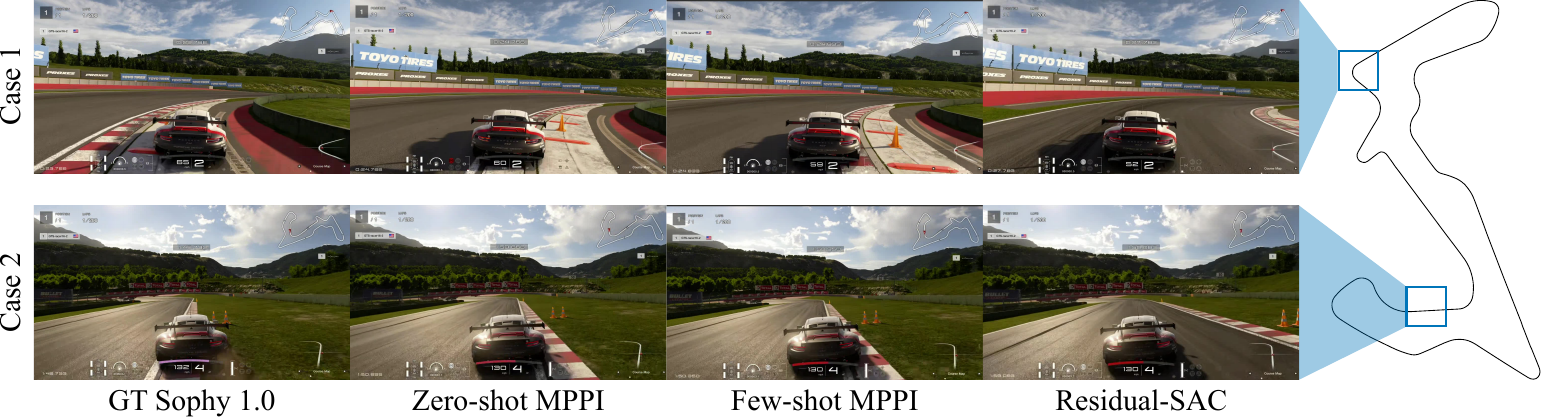}
    \caption{In-game screen shots of Policy Behavior on Different Road Sections} \label{fig: gtsresults}
    % \vspace{-1.5em}
\end{figure}

% comparing to prior: we do customize
% comparing to zero-shot: few-shot setting works(finetuned-dynamics) 
The experimental results, summarized in Table~\ref{tbl: GTS Customization Results}, demonstrate that Residual-MPPI significantly enhances the safety of GT Sophy 1.0 by reducing its off-course steps, albeit with a marginal increase in lap time. Further improvements are observed after employing data gathered during the customization process to fine-tune the dynamics under a few-shot setting. This few-shot version of Residual-MPPI outperforms the zero-shot version in terms of lap time and off-course steps.

The ablation results of the planning parameters and visualization in GTS can be found in Appendix~\ref{sec: gts ablation} and Appendix~\ref{sec: gts visualization}, which clearly demonstrates the effectiveness of the proposed method by greatly reducing the off-course steps. In the detailed route selection visualization, it can be observed that Few-shot MPPI chose a safer and faster racing line compared to Zero-shot MPPI with more accurate dynamics. Though the customized policies can not eliminate all the off-course steps, it is worth noting that these violations are minor (i.e., slightly touching the course boundaries) compared to GT Sophy 1.0, making the customized policies safe enough to stay on course.

% \subsubsection{Compration with Baselines}
\label{sec: gts baselines}
% comparing to Residual-SAC: Data Efficiency
% Why Residual-SAC not work --- conservative/ hard to learn intrinsically
% the drawback of (might) making it work --- additional effort on framework and data
 \textbf{Residual-SAC}, compared with the Residual-MPPI, yields a very conservative customized policy. As shown in Figure~\ref{fig: gtsresults}, it is obvious that the Residual-SAC policy behaves sub-optimally and overly yields to the on-course constraints. It is worth noting that over \textit{80,000 laps} of roll-outs are collected to achieve the current performance of Residual-SAC. In contrast, the data used to train GTS dynamics for Residual-MPPI, along with the online data for dynamics model fine-tuning, amounted to only approximately \textit{2,000} and \textit{100} laps, respectively. As discussed in safe RL and curriculum learning methods \citep{mo2021safe, anzalone2022end}, when training with constraints, it is easy for RL to yield to an overly conservative behavior. At the checkpoint with 2k laps of training data, Residual-SAC could not finish a lap yet, which highlights the outstanding data-efficiency of Residual-MPPI.
 % The in-game evaluation video of the corresponding checkpoint could be found on the website.

\begin{table}[t]
\caption{Experimental Results of Residual-MPPI in GTS}\label{tbl: GTS Customization Results}
\resizebox{\textwidth}{!}{
\begin{threeparttable}
\begin{tabular}{@{}clcccc@{}}
\toprule
& Policy & GT Sophy 1.0 & Zero-shot MPPI & Few-shot MPPI & Residual-SAC (80K laps) \\
\midrule
& Lap Time         & $117.77 \pm 0.08$ & $123.34 \pm 0.22$ & $122.93 \pm 0.14$ & $130.00 \pm 0.13$ \\ 
& Off-course Steps & $93.13 \pm 1.98$  & $9.03 \pm 3.33$   & $4.43 \pm 2.39$ & $0.87  \pm 0.78$\\

\midrule
& Policy          & Full-MPPI  & Guided-MPPI & Greedy-MPPI  & Residual-SAC (2K laps) \\
\midrule
& Lap Time         & *Failed & *Failed & *Failed & *Failed     \\ 
& Off-course Steps & *Failed & *Failed & *Failed & *Failed   \\
\bottomrule

\end{tabular}
\begin{tablenotes}
    \small 
    \item The evaluation results are in the form of $\mathrm{mean} \pm \mathrm{std}$ over 30 laps. *Failed baseline is not able to finish a complete lap. Valued-MPPI is not available since we only have access to the policy network of GT Sophy 1.0.
\end{tablenotes}
\end{threeparttable}
}
% \vspace{-1.5em}
\end{table}

\begin{figure}[t]
    \centering
    \includegraphics[scale=0.5]{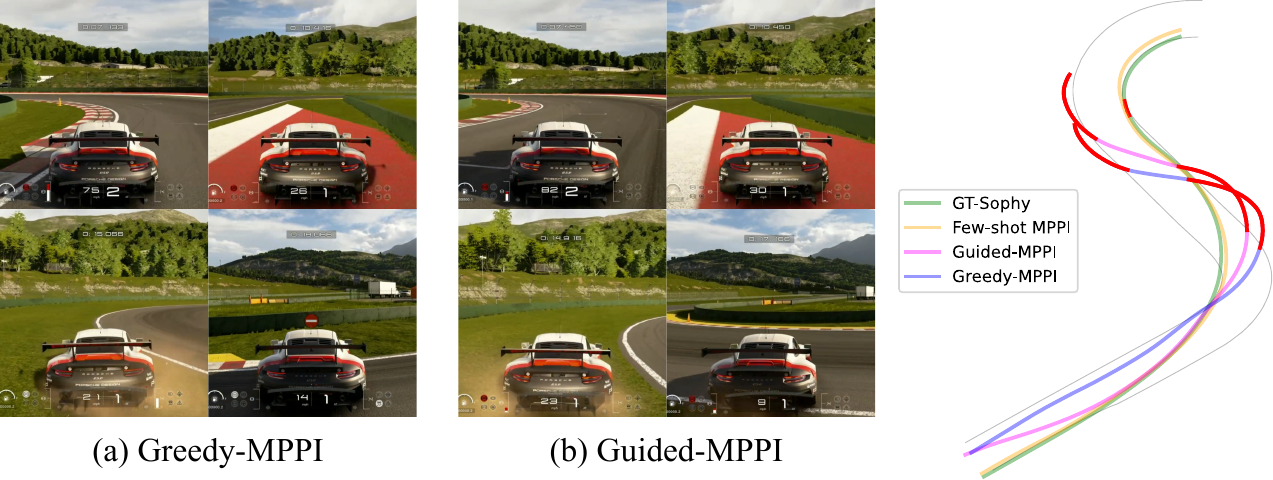}
    \caption{Guided-MPPI and Greedy-MPPI Results in GTS. (a) In-game screenshots of Greedy-MPPI; (b) In-game screenshots of Guided-MPPI. Red parts indicate off-course behaviors. Both baselines cannot drive the vehicle effectively, completely going off track at the first corner.}
    \label{fig: GTS-Guided-MPPI Ablation}
\vspace{-1.5em}
\end{figure}

\textbf{Guided-MPPI}, however, cannot finish the track stably, as shown in \figref{fig: GTS-Guided-MPPI Ablation}. Similar to what we have analyzed in the MuJoCo experiments, Guided-MPPI suffers from its limited ability to account for the long-term impacts of actions given the finite planning horizon. This limitation leads to more severe consequences in complex tasks that require long-term, high-level decision-making, such as route selection during racing, which leads to failure in GTS. 
%At this point, Guided-MPPI that lacks a terminal value and the infinite horizon will fail even more severely. 

\textbf{Greedy-MPPI}, as mentioned above, also leads to severe failure in the complex GTS task, further emphasizing the importance of the $\log\pi$ term in Residual-MPPI's objective function. In GTS, the Greedy-MPPI only optimizes the trajectories by not allowing the policy to be off-course, where a straightforward local optimal but undesirable solution would be just staying still. Such a trivial policy would behave better on the add-on tasks (stay on course) but not on the full task (stay on course while maintaining racing superiority). From the theoretical perspective, the significance of the $\log\pi$ term goes far beyond a simple regularization term. It is the key factor in addressing the policy customization problem, which is inherently a part of the joint optimization objective, encoding and passing the information of the original reward in a theoretically sound manner.

%%%%%%%%%%%%%%%%%%%%%%%%%%%%%%%%%%%%%%%%%%%%%%%%%%%%%%%%%%%%%%%%%%%%%%%%%%%%%%%%%%%%%%%%%%%%%%%%
% \setstretch{0.95}
\section{Related Works}

% Model-based RL. Work with different setting.
\textbf{Model-based RL.} 
Many works focused on combining learning-based agents with classical planning to enhance performance, especially in various model-based RL approaches. MuZero~\citep{MuZero} employs a learned Actor as the search policy during the MCTS process and utilizes the critic as the terminal value at the leaf nodes; TD-MPC2~\citep{ModelbasedRL2} follows a similar approach and extends it to continuous task by utilizing MPPI as the continuous planner. RL-driven MPPI~\citep{GuidedMPPI} chooses the distributional SACs as the prior policy and also adopts the MPPI as the planner. 
However, those methods mainly address the planning within the same task and have a full reward and value function, while policy customization requires jointly optimizing an add-on reward and the underlying reward of the prior policy, which is unknown in a priori. In general cases, the prior policy may not provide a critic, as is the case with algorithms like policy gradient~\citep{SoftPG}. Moreover, the value function will change with the new reward, making it infeasible to be applied. Therefore, these methods are not able to solve policy customization tasks directly.

% RL Fine-tuning. Worked but required additional training
% the down side seems to be the same as it requires addtional training, while our method does not?
\textbf{Policy Adaptation.} 
There have been numerous works considering on the topic of policy adaptation. Some works require an extra demonstration dataset to adapt the policy to the new task, like Ceil~\citep{liu2023ceil} and Prompt-DT~\citep{xu2022prompting}. However, such expert demonstrations are unavailable in the online policy customization settings. Some works focus on adaptation via RL fine-tuning:
% such as guided curriculum learning and offline RL methods
JSRL~\citep{uchendu2023jump} uses a prior policy as the guiding policy to form a curriculum for the exploration policy; 
% Pessimistic Offline Reinforcement Learning~\citep{li2022dealing} (PessORL) deliberately penalizes high values at states that are absent in the training dataset to actively lead the agent back to the area where it is familiar by manipulating the value function. 
AWAC~\citep{goal2} combines sample-efficient dynamic programming with maximum likelihood policy updates to leverage large amounts of offline data for quickly performing online fine-tuning of RL policies. However, the main objective in RL fine-tuning is to maximize the policy return on the new task. In contrast, policy customization~\citep{RQL} aims to find a policy that can maintain the prior performance on the basic task at the same time. 

% Planning with Prior Policy. further modelling error; formulate as goals not flexible; REMCTS limits to discrete. 
\textbf{Planning with Prior Policy.} Some works address a similar problem of customizing pre-trained prior policies towards additional requirements. Efficient game-theoretic planning~\citep{li2022efficient} uses a planner to construct human behavior within the pre-trained prediction model. Deep Imitative Model~\citep{goal} aims to direct an imitative policy to arbitrary goals during online usage. However, they require full reward modeling with hand-designed features, or the add-on task is in the form of specific goals. Residual-MCTS proposed in RQL~\citep{RQL} solves the policy customization online via planning but is limited to tasks with discrete action spaces. 
MBOP~\citep{argenson2020model} solves the offline RL problem through planning by learning dynamics, action prior, and values from an offline dataset. LOOP~\citep{sikchi2022learning} focused on learning off-policy with samples from an online planner. However, both approaches require knowledge of the basic reward function and thus cannot be directly applied to our setting.

%They would directly reduce to the Guided-MPPI baseline since we do not have access to the value function. Only when the value function is accessible, would they reduce to Valued-MPPI, although it is not aligned with the policy customization setting and the reward setting of the full task. In our experiments, those methods are included as baselines and the results demonstrate the superiority of the proposed Residual-MPPI algorithm.

%%%%%%%%%%%%%%%%%%%%%%%%%%%%%%%%%%%%%%%%%%%%%%%%%%%%%%%%%%%%%%%%%%%%%%%%%%%%%%%%%%%%%%%%%%%%%%%%%%%%%%%%%%

\section{Limitations and Future Work}
In this work, we propose \textit{Residual-MPPI}, which integrates RQL into the MPPI framework, resulting in an online planning algorithm that customizes the prior policy at the execution time. By conducting experiments in MuJoCo and GTS, we show the effectiveness of our methods against the baselines. Here, we discuss some limitations of the current method, inspiring several future research directions.

\textbf{Terminal Value Estimation.}
Similar to most online planning methods, our algorithm also needs a sufficiently long horizon to reduce the error brought by the absence of a terminal value estimator. Though utilizing prior policy partially addresses the problem discussed in Sec.~\ref{sec:exp.Results}, it can not eliminate the error entirely.
% Future Work
Techniques like JSRL \citep{uchendu2023jump} and RL-driven MPPI~\citep{GuidedMPPI} could be incorporated in the proposed framework in the few-shot setting by using the collected data to learn a Residual Q-function online as a terminal value estimator.
% , which will be one of our future works. 

\textbf{Sim2Real Transferability.}
Sim2real will still face several additional challenges, which mainly comes from the suboptimal prior policy and the domain gap.
% The proposed algorithm requires its prior policy to indicate the desired behavior for the original task accurately.
Prior policies without careful modeling or sufficient training could mislead the evaluation step of the proposed algorithm and result in poor planning.
Also, the proposed algorithm relies on an accurate dynamics model to correctly roll out. 
In the future, we can introduce more advanced methods in policies and dynamics training, such as diffusion policies~\citep{chi2023diffusion} and world models~\citep{micheli2022transformers}, to improve the prior policies and dynamics. We look forward to further extending residual-MPPI to achieve sim-to-sim and sim-to-real for challenging agile, continuous control tasks by addressing these limitations.

% Even with perfect dynamics and prior policy, it is difficult for the planning-based methods to look into longer future with limited horizon and step-reward only. However, learning-based methods can often consider the long-term effects of actions within a limited horizon through techniques like the Q-function. Techniques like Jump-start RL \citep{uchendu2023jump} could be incorporated in the Residual-MPPI framework in the few-shot setting, by using the collected data to learn a Q-function online as a terminal value estimator.

%%%%%%%%%%%%%%%%%%%%%%%%%%%%%%%%%%%%%%%%%%%%%%%%%%%%%%%%%%%%%%%%%%%%%%%%%%%%%%%%%%%%%%%%%%%%%%%%%%%%%%%%%%
\section*{Acknowledgment}
% HaoCe
The authors thank Ce Hao for valuable discussions and kind help.
% Sponsor
This work was supported by Sony AI, and Polyphony Digital Inc., which provided the Gran Turismo Sport framework.

% \subsubsection*{Acknowledgments}
% % HaoCe
% The authors thank Ce Hao for valuable discussions and kind help.
% % Sponsor
% This work was supported by Sony AI, and Polyphony Digital Inc., which provided the Gran Turismo Sport framework.

\bibliography{iclr2025_conference}
\bibliographystyle{iclr2025_conference}

\newpage
\appendix
\section{Derivation}
\label{appendix: derivation}

In this section, we provide a detailed proof of Proposition~\ref{the: MEMPPI}. We start with introducing a lemma to expand the action distribution $q^*(U)$.

\begin{lemma} \label{lemma: break down}
Given an MDP with a deterministic state transition $p$, the distribution of the action sequence $q(U)$ at state $\boldsymbol{x}_0$ in horizon $T$, where each action $\boldsymbol{u}_t, t = 0, \cdots, T-1$ is sequentially sampled from a policy $\pi(\boldsymbol{u}_t|\boldsymbol{x}_t)$ is
\begin{equation}
\begin{aligned}
\label{eq: lemma equation}
q(U) =\ \prod_{t=0}^{T-1} \pi(\boldsymbol{u}_t|\boldsymbol{x}_t).
\end{aligned}
\end{equation}
\end{lemma}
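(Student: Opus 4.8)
\textbf{Proof plan for Lemma~\ref{lemma: break down}.}

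The plan is to proceed by induction on the horizon $T$, exploiting the fact that the state transition is deterministic. The base case $T=1$ is immediate: the action sequence consists of the single action $\boldsymbol{u}_0$ sampled from $\pi(\boldsymbol{u}_0|\boldsymbol{x}_0)$, so $q(U) = \pi(\boldsymbol{u}_0|\boldsymbol{x}_0)$, which matches the claimed product.

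For the inductive step, I would write the joint density of $U = (\boldsymbol{u}_0, \dots, \boldsymbol{u}_{T-1})$ using the chain rule of probability as $q(U) = q(\boldsymbol{u}_0, \dots, \boldsymbol{u}_{T-2}) \cdot q(\boldsymbol{u}_{T-1} \mid \boldsymbol{u}_0, \dots, \boldsymbol{u}_{T-2})$. By the inductive hypothesis applied at state $\boldsymbol{x}_0$ with horizon $T-1$, the first factor equals $\prod_{t=0}^{T-2} \pi(\boldsymbol{u}_t|\boldsymbol{x}_t)$. For the second factor, the key observation is that conditioning on $\boldsymbol{u}_0, \dots, \boldsymbol{u}_{T-2}$ and on the fixed initial state $\boldsymbol{x}_0$ completely determines the intermediate states $\boldsymbol{x}_1, \dots, \boldsymbol{x}_{T-1}$ through the deterministic recursion $\boldsymbol{x}_{t+1} = F(\boldsymbol{x}_t, \boldsymbol{u}_t)$ — there is no randomness in the state evolution. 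Hence the conditional distribution of $\boldsymbol{u}_{T-1}$ given the earlier actions is exactly $\pi(\boldsymbol{u}_{T-1} \mid \boldsymbol{x}_{T-1})$, the policy evaluated at the (now determined) state $\boldsymbol{x}_{T-1}$. Multiplying the two factors gives $\prod_{t=0}^{T-1} \pi(\boldsymbol{u}_t|\boldsymbol{x}_t)$, completing the induction.

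The only subtlety — and the step I would be most careful about — is making the "conditioning fixes the intermediate states" argument rigorous at the level of densities: one should note that since $\boldsymbol{x}_t$ is a deterministic function of $(\boldsymbol{x}_0, \boldsymbol{u}_0, \dots, \boldsymbol{u}_{t-1})$, the sampling process has no hidden stochasticity beyond the actions themselves, so there is no Jacobian or marginalization over latent state variables to worry about. Equivalently, one can view the whole sequence generation as a pushforward where each $\boldsymbol{u}_t$ is drawn from $\pi(\cdot \mid \boldsymbol{x}_t)$ with $\boldsymbol{x}_t$ already a measurable function of the previously drawn quantities, so the product form of the joint density follows directly. This is essentially the statement that, with deterministic dynamics, the trajectory distribution is carried entirely by the action noise, which is precisely what makes the later identification with the Boltzmann form in Theorem~\ref{the: MEMPPI} possible.
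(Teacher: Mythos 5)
Your proposal is correct and rests on the same two ingredients as the paper's proof: the chain rule factorization of the joint action density together with the observation that, under deterministic dynamics, each intermediate state $\boldsymbol{x}_t$ is a fixed function of $\boldsymbol{x}_0$ and the earlier actions, so the conditional of $\boldsymbol{u}_t$ reduces to $\pi(\boldsymbol{u}_t|\boldsymbol{x}_t)$. The paper merely unrolls your induction explicitly, writing the marginalization over intermediate states as nested integrals against $p(\boldsymbol{x}_{t+1}'|\boldsymbol{x}_t,\boldsymbol{u}_t)=\delta\left(\boldsymbol{x}_{t+1}', F(\boldsymbol{x}_t,\boldsymbol{u}_t)\right)$ and collapsing them, which is exactly the "no hidden stochasticity, no marginalization needed" point you flag as the one subtlety.
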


\begin{proof}
Let $\pi(\boldsymbol{u}_0, \boldsymbol{u}_1, \cdots, \boldsymbol{u}_{T-1} | \boldsymbol{x}_0)$ denote the expanded notation of $q(U)$ by substituting $u_t$ and maximum-entropy policy $\pi$. Firstly, we apply the conditional probability formula to expand the equation:
\begin{equation}
\label{eq: rewrite1}
    \pi(\boldsymbol{u}_0, \boldsymbol{u}_1, \cdots, \boldsymbol{u}_{T-1} | \boldsymbol{x}_0)
    = \pi(\boldsymbol{u}_0|\boldsymbol{x}_0) 
    \int_\mathcal{X} 
    \pi(\boldsymbol{u}_1, \cdots, \boldsymbol{u}_{T-1} | \boldsymbol{x}_0, \boldsymbol{u}_0, \boldsymbol{x}_1^\prime) 
    p(\boldsymbol{x}_1^\prime |\boldsymbol{x}_0,  \boldsymbol{u}_0)
    d\boldsymbol{x}_1^\prime.\\
    % &= \pi(\boldsymbol{u}_0|\boldsymbol{x}_0) \times \pi(\boldsymbol{u}_1, \cdots, \boldsymbol{u}_{T-1} | \boldsymbol{x}_1)\\
    % &=\frac{1}{\prod_{t=0}^{T-1} Z_{\boldsymbol{x}_t}} \exp\left(\frac{1}{\alpha} \sum_{t=0}^{T-1}  Q(\boldsymbol{x}_t, \boldsymbol{u}_t)\right),
\end{equation}

Considering the Markov property of the problem,
\begin{equation}
\pi(\boldsymbol{u}_k, \cdots, \boldsymbol{u}_{k+N} | \boldsymbol{x}_0, \boldsymbol{x}_{1}^\prime, \cdots,  \boldsymbol{x}_{k-1}^\prime, \boldsymbol{u}_0, \boldsymbol{u}_{k-1}, \boldsymbol{x}^\prime_{k}) = \pi(\boldsymbol{u}_k, \cdots, \boldsymbol{u}_{k+N} | \boldsymbol{x}^\prime_k),
\end{equation}
Eq.~\eqref{eq: rewrite1} could be further expanded recursively till the end:
\begin{equation}
\label{eq: rewrite2}
\begin{aligned}
    \pi(\boldsymbol{u}_0,  \boldsymbol{u}_1,\cdots, \boldsymbol{u}_{T-1} | \boldsymbol{x}_0)=\ &\pi(\boldsymbol{u}_0|\boldsymbol{x}_0) \int_\mathcal{X} 
    \pi(\boldsymbol{u}_1, \cdots, \boldsymbol{u}_{T-1} | \boldsymbol{x}_0, \boldsymbol{u}_0, \boldsymbol{x}_1^\prime) 
    p(\boldsymbol{x}_1^\prime |\boldsymbol{x}_0,  \boldsymbol{u}_0)d\boldsymbol{x}_1^\prime.\\
    =\ &\pi(\boldsymbol{u}_0|\boldsymbol{x}_0) \int_\mathcal{X} \pi(\boldsymbol{u}_1, \cdots, \boldsymbol{u}_{T-1} | \boldsymbol{x}_1^\prime) p(\boldsymbol{x}_1^\prime |\boldsymbol{x}_0,  \boldsymbol{u}_0)d\boldsymbol{x}_1^\prime\\
    =\ &\pi(\boldsymbol{u}_0|\boldsymbol{x}_0) 
    \int_\mathcal{X} 
    \pi(\boldsymbol{u}_1|\boldsymbol{x}^\prime_1) 
    p(\boldsymbol{x}_1^\prime |\boldsymbol{x}_0,  \boldsymbol{u}_0) \\
    & \int_\mathcal{X} 
    \pi(\boldsymbol{u}_2, \cdots, \boldsymbol{u}_{T-1} | \boldsymbol{x}_1^\prime) 
    p(\boldsymbol{x}_2^\prime |\boldsymbol{x}_1,  \boldsymbol{u}_1)
    d\boldsymbol{x}_1^\prime
    d\boldsymbol{x}_2^\prime \\
    =\ &\cdots \\
    =\ &
    \pi(\boldsymbol{u}_0|\boldsymbol{x}_0) 
    \int_\mathcal{X} p(\boldsymbol{x}_1^\prime |\boldsymbol{x}_{0}, \boldsymbol{u}_{0})
    \pi(\boldsymbol{u}_1 | \boldsymbol{x}_1^\prime) \\     
    & \int \cdots \int_\mathcal{X} \prod_{t=2}^{T-1} \pi(\boldsymbol{u}_t | \boldsymbol{x}_t^\prime)      
    p(\boldsymbol{x}_t^\prime |\boldsymbol{x}^\prime_{t-1},  \boldsymbol{u}_{t-1})
    d\boldsymbol{x}_1^\prime \cdots d\boldsymbol{x}_{T-1}^\prime. \\
\end{aligned}
\end{equation}

Since the state transition $p$ is a deterministic function
\begin{equation}
\label{eq: deterministic}
p\left(\boldsymbol {x}_{t+1}^\prime | \boldsymbol {x}_t, \boldsymbol {u}_t) = \delta(\boldsymbol {x}_{t+1}^\prime, F (\boldsymbol {x}_t, \boldsymbol {u}_t)\right),
\end{equation}

the integral signs could be eliminated by defining $\boldsymbol {x}_{t+1} = F (\boldsymbol {x}_t, \boldsymbol {u}_t)$:
\begin{equation}
\label{eq: rewrite3}
\begin{aligned}
    \pi(\boldsymbol{u}_0, \boldsymbol{u}_1,\cdots, \boldsymbol{u}_{T-1} | \boldsymbol{x}_0)
    =\ &
    \pi(\boldsymbol{u}_0|\boldsymbol{x}_0) 
    \int_\mathcal{X} p(\boldsymbol{x}_1^\prime |\boldsymbol{x}_{0}, \boldsymbol{u}_{0})
    \pi(\boldsymbol{u}_1 | \boldsymbol{x}_1^\prime) \\
    & \int \cdots \int_\mathcal{X} \prod_{t=2}^{T-1} \pi(\boldsymbol{u}_t | \boldsymbol{x}_t^\prime)      
    p(\boldsymbol{x}_t^\prime |\boldsymbol{x}^\prime_{t-1},  \boldsymbol{u}_{t-1})
    d\boldsymbol{x}_1^\prime \cdots d\boldsymbol{x}_{T-1}^\prime \\
    =\ &
    \pi(\boldsymbol{u}_0|\boldsymbol{x}_0) 
    \int_\mathcal{X} \delta\left(\boldsymbol {x}_{t+1}^\prime, F (\boldsymbol {x}_t, \boldsymbol {u}_t)\right)
    \pi(\boldsymbol{u}_1 | \boldsymbol{x}_1^\prime) \\
    &\int \cdots \int_\mathcal{X} \prod_{t=2}^{T-1} \pi(\boldsymbol{u}_t | \boldsymbol{x}_t^\prime)      
    p(\boldsymbol{x}_t^\prime |\boldsymbol{x}^\prime_{t-1},  \boldsymbol{u}_{t-1})
    d\boldsymbol{x}_1^\prime \cdots d\boldsymbol{x}_{T-1}^\prime \\
    =\ &
    \pi(\boldsymbol{u}_0|\boldsymbol{x}_0) 
    \pi(\boldsymbol{u}_1 | \boldsymbol{x}_1)      
    \int_\mathcal{X} p(\boldsymbol{x}_2^\prime |\boldsymbol{x}_{1}, \boldsymbol{u}_{1})
    \pi(\boldsymbol{u}_2 | \boldsymbol{x}_2^\prime)   \\   
    &\int \cdots \int_\mathcal{X} \prod_{t=3}^{T-1} \pi(\boldsymbol{u}_t | \boldsymbol{x}_t^\prime)      
    p(\boldsymbol{x}_t^\prime |\boldsymbol{x}^\prime_{t-1},  \boldsymbol{u}_{t-1})
    d\boldsymbol{x}_2^\prime \cdots d\boldsymbol{x}_{T-1}^\prime \\
    =\ &\cdots \\
    =\ &\prod_{t=0}^{T-1} \pi(\boldsymbol{u}_t|\boldsymbol{x}_t).
\end{aligned}
\end{equation}
\end{proof}

% PROVE THE THEOREM AFTER
With Lemma~\ref{lemma: break down}, now we provide a detailed proof of Theorem~\ref{the: MEMPPI}.

\begin{customthm}{1}
Given an MDP defined by $\mathcal{M} = (\mathcal{X}, \mathcal{U}, r, p)$, with a deterministic state transition $p$ defined with respect to a dynamics model $F$ and a discount factor $\gamma = 1$, the distribution of the action sequence $q^*(U)$ at state $\boldsymbol{x}_0$ in horizon $T$, where each action $\boldsymbol{u}_t, t = 0, \cdots, T-1$ is sequentially sampled from the maximum-entropy policy~\citep{Soft-Q} with an entropy weight $\alpha$ is
\begin{equation}
\begin{aligned}
    q^*(U)
    &=\frac{1}{Z_{\boldsymbol{x}_0}} \exp\left(\frac{1}{\alpha} \left(\sum_{t=0}^{T-1}r(\boldsymbol{x}_t, \boldsymbol{u}_t) + V^*(\boldsymbol{x}_{T})\right)\right), 
\end{aligned}
\end{equation}
where $V^*$ is the soft value function~\citep{Soft-Q} and $\boldsymbol{x}_t$ is defined recursively from $\boldsymbol{x}_0$ and $U$ through the dynamics model $F$  as $\boldsymbol{x}_{t+1} = F(\boldsymbol{x}_t, \boldsymbol{u}_t ), t = 0, \cdots, T-1$.
\end{customthm}

\begin{proof}
The maximum-entropy policy with an entropy weight $\alpha$ solves the problem $\mathcal{M}$ following the one-step Boltzmann distribution:
\begin{equation}
    \label{eq: Boltzmann distribution}
    \pi^*(\boldsymbol{u}_t | \boldsymbol{x}_t) = \frac{1}{Z_{\boldsymbol{x}_t}} \exp\left(\frac{1}{\alpha}Q^*(\boldsymbol{x}_t, \boldsymbol{u}_t)\right),
\end{equation}
where $Z_{\boldsymbol{x}_t}$ is the normalization factor defined as $\int_{\mathcal{U}}\exp\left(\frac{1}{\alpha}Q^* (\boldsymbol{x}_t,\boldsymbol{u}_t)\right)d\boldsymbol{u}_t$ and $Q^{*}(\boldsymbol{x}_t, \boldsymbol{u}_t)$ is the soft Q-function as defined in \citep{Soft-Q}:
\begin{equation}
\begin{aligned}
    \label{eq: soft Bellman equation}
    &Q^*(\boldsymbol{x}_t, \boldsymbol{u}_t) = r(\boldsymbol{x}_t, \boldsymbol{u}_t) + \alpha \log \int_{\mathcal{U}} \exp \left(\frac{1}{\alpha}Q^*\left(\boldsymbol{x}_{t+1}, \boldsymbol{u}_{t+1}\right)\right) d\boldsymbol{u} .
\end{aligned}
\end{equation}
% Note that $\boldsymbol{x}_{t+1}$ follows the dynamics $\boldsymbol{x}_{t+1} = F(\boldsymbol{x}_t, \boldsymbol{u}_t)$. Therefore, the distribution of the action sequence $q^*(U)$ where where each action $\boldsymbol{u}_t, t = 0, \cdots, T-1$ is sequentially sampled from the maximum-entropy policy Eq.~\eqref{eq: Boltzmann distribution} can be written as:
With Lemma~\ref{lemma: break down}, the optimal action distribution could be rewritten as:
\begin{equation}
    \label{eq: action sequence dis}
    q^*(U) = \prod_{t=0}^{T-1} \pi^*(\boldsymbol{u}_t | \boldsymbol{x}_t)  
    % = \frac{1}{\prod_{t=0}^{T-1} Z_{\boldsymbol{x}_t}} \exp\left(\frac{1}{\alpha} \sum_{t=0}^{T-1}  Q^*(\boldsymbol{x}_t, \boldsymbol{u}_t)\right),
\end{equation}
where each $\boldsymbol{x}_t$ is defined recursively from $\boldsymbol{x}_0$ and $U$ through the dynamics model $F$  as $\boldsymbol{x}_{t+1} = F(\boldsymbol{x}_t, \boldsymbol{u}_t ), t = 0, \cdots, T-1$.
% \begin{equation}
% \label{eq: action sequence dis}
% \begin{aligned}
%     p(\boldsymbol{u}_0, \boldsymbol{u}_1, \cdots, \boldsymbol{u}_{T-1} | \boldsymbol{x}_0) &= p(\boldsymbol{u}_0|\boldsymbol{x}_0) \times p(\boldsymbol{u}_1, \cdots, \boldsymbol{u}_{T-1} | \boldsymbol{u}_0, \boldsymbol{x}_0) \\
%     &= \pi(\boldsymbol{u}_0|\boldsymbol{x}_0) \times p(\boldsymbol{u}_1, \cdots, \boldsymbol{u}_{T-1} | \boldsymbol{x}_1)\\
%     &=\frac{1}{\prod_{t=0}^{T-1} Z_{\boldsymbol{x}_t}} \exp\left(\frac{1}{\alpha} \sum_{t=0}^{T-1}  Q^*(\boldsymbol{x}_t, \boldsymbol{u}_t)\right),
% \end{aligned}
% \end{equation}
By substituting Eq.~\eqref{eq: Boltzmann distribution} and Eq.~\eqref{eq: soft Bellman equation}, Eq.~\eqref{eq: action sequence dis} could be further expanded:

% \begin{equation}
% \label{eq: derivation conclusion}
% \begin{aligned}
%    q^*(U) 
%    % &= \frac{1}{\prod_{t=0}^{T-1} Z_{\boldsymbol{x}_t}} \exp\left(\frac{1}{\alpha} \sum_{t=0}^{T-1}  Q^*(\boldsymbol{x}_t, \boldsymbol{u}_t)\right) \\
%    %  &=\frac{1}{\prod_{t=0}^{T-1} Z_{\boldsymbol{x}_t}}  \exp\left(\frac{1}{\alpha} \sum_{t=0}^{T-1} r(\boldsymbol{x}_t, \boldsymbol{u}_t) + \log(\prod_{t=1}^{T} Z_{\boldsymbol{x}_t})\right) \\ 
%     &=\frac{1}{Z_{\boldsymbol{x}_0}} \exp\left(\frac{1}{\alpha} \left(\sum_{t=0}^{T-1}r(\boldsymbol{x}_t, \boldsymbol{u}_t) + V^*(\boldsymbol{x}_{T})\right)\right),
% \end{aligned}
% \end{equation}

% Note that the maximum-entropy policies follow the Boltzmann distribution Eq.~\eqref{eq: Boltzmann distribution}.
% \begin{equation}
%     \pi^*(\boldsymbol{u}_t | \boldsymbol{x}_t) = \frac{1}{Z_{\boldsymbol{x}_t}} \exp\left(\frac{1}{\alpha}Q^*(\boldsymbol{x}_t, \boldsymbol{u}_t)\right).
% \end{equation}
% we could derive Eq.~\eqref{eq: action sequence dis}:
% \begin{equation}
%     \pi^*(\boldsymbol{u}_0, \boldsymbol{u}_1, \cdots, \boldsymbol{u}_{T-1}|\boldsymbol{x}_0) = \frac{1}{\prod_{t=0}^{T-1} Z_{\boldsymbol{x}_t}} \exp\left(\frac{1}{\alpha} \sum_{t=0}^{T-1}  Q^*(\boldsymbol{x}_t, \boldsymbol{u}_t)\right),
% \end{equation}
% Eq.~\eqref{eq: action sequence dis} and Eq.~\eqref{eq: derivation conclusion} could be derived from Eq.~\eqref{eq: rewrite3}:

\begin{subequations}
\label{eq: derivation conclusion}
\begin{align}
   % &\pi^*(\boldsymbol{u}_0, \boldsymbol{u}_1, \cdots, \boldsymbol{u}_{T-1} | \boldsymbol{x}_0) \\
    &q^*(U) \\
   % &=\ \prod_{t=0}^{T-1} \pi^*(\boldsymbol{u}_t|\boldsymbol{x}_t) \label{eq: final derivation 1} \\ 
   &= \frac{1}{\prod_{t=0}^{T-1} Z_{\boldsymbol{x}_t}} \exp\left(\frac{1}{\alpha} \sum_{t=0}^{T-1}  Q^*(\boldsymbol{x}_t, \boldsymbol{u}_t)\right) \label{eq: final derivation 2}\\
   &= \frac{1}{\prod_{t=0}^{T-1} Z_{\boldsymbol{x}_t}} \exp\left(\frac{1}{\alpha} \sum_{t=0}^{T-1} \left( r(\boldsymbol{x}_t, \boldsymbol{u}_t) + \alpha \log \int_{\mathcal{U}} \exp \left(\frac{1}{\alpha}Q^*\left(\boldsymbol{x}_{t+1}, \boldsymbol{u}_{t+1}\right)\right) d\boldsymbol{u} \right) \right) \label{eq: final derivation 3}\\
    &=\frac{1}{\prod_{t=0}^{T-1} Z_{\boldsymbol{x}_t}}  \exp\left(\frac{1}{\alpha} \sum_{t=0}^{T-1} r(\boldsymbol{x}_t, \boldsymbol{u}_t) + \log\left(\prod_{t=1}^{T-1} Z_{\boldsymbol{x}_t}\right) + \log Z_{\boldsymbol{x}_T}
   \right) \label{eq: final derivation 4} \\ 
    &=\frac{\prod_{t=1}^{T-1} Z_{\boldsymbol{x}_t}}{\prod_{t=0}^{T-1} Z_{\boldsymbol{x}_t}}  \exp\left(\frac{1}{\alpha} \left(\sum_{t=0}^{T-1} r(\boldsymbol{x}_t, \boldsymbol{u}_t) + \alpha \log Z_{\boldsymbol{x}_T}
   \right)\right) \label{eq: final derivation 5} \\ 
    &=\frac{1}{Z_{\boldsymbol{x}_0}} \exp\left(\frac{1}{\alpha} \left(\sum_{t=0}^{T-1}r(\boldsymbol{x}_t, \boldsymbol{u}_t) + V^*(\boldsymbol{x}_{T})\right)\right), \label{eq: final derivation 6}
\end{align}
\end{subequations}
where Eq.~\eqref{eq: final derivation 2} results from substituting Eq.~\eqref{eq: Boltzmann distribution} and Eq.~\eqref{eq: final derivation 3} results from substituting Eq.~\eqref{eq: soft Bellman equation}.

In Eq.~\eqref{eq: final derivation 6}, $V^*(\boldsymbol{x}_{T}) = \alpha \log Z_{\boldsymbol{x}_{T}}$ is the soft value function defined in \citep{Soft-Q} at the terminal step. Each $\boldsymbol{x}_t$ is defined recursively from $\boldsymbol{x}_0$ and $U$ through the dynamics model $F$  as $\boldsymbol{x}_{t+1} = F(\boldsymbol{x}_t, \boldsymbol{u}_t ), t = 0, \cdots, T-1$.
\end{proof}

\section{Complete Residual-MPPI Deployment Pipeline}
\label{sec: Residual-MPPI Pipeline}
% In practice, we employed additional techniques to enhance the stability of dynamics training.  In our implementation, we minimized the dimensions needed for prediction by focusing on critical aspects of the racing dynamics and utilized physical insights to separately predict more important information.
% The Tricks we used: History Input and Split State Space

The complete Residual-MPPI deployment pipeline is shown in Algorithm~\ref{algo: ResidualMPPI Pipeline}
\begin{algorithm}[h]
    \caption{Residual-MPPI Deployment Pipeline} 
    \label{algo: ResidualMPPI Pipeline}
    \begin{algorithmic}[1]
     \Require Prior policy $\pi$
     \State Initialize a dataset $\mathcal{D} \leftarrow \emptyset$, dynamics $F_\theta$
      \For{$t = 0, 1, \dots$}            \Comment{{Dynamics Training}\label{alg:Dynamics Training}}
        \State $\boldsymbol{u}_t = \arg\max \pi(\boldsymbol {u}|\boldsymbol {x}_t) + \mathcal{E}$                    \Comment{\textit{Exploration Noise}}
        \State $\boldsymbol {x}_{t+1} \leftarrow \text{Environment}(\boldsymbol{x}_t, \boldsymbol{u}_t)$      
        \State $\mathcal{D} \leftarrow \mathcal{D} \cup (\boldsymbol{x}_t, \boldsymbol{u}_{t}, \boldsymbol{x}_{t+1})$ \Comment{\textit{Multi-step Error}}
        \State $\theta \leftarrow \arg\min_\theta \mathbb{E}_\mathcal{D} \left[ \sum_{t=0}^{T} \gamma^t (\boldsymbol{x}_{t+1} - F_\theta(\hat{\boldsymbol{x}}_t, \boldsymbol{u}_t))^2  \right]$
    \EndFor
      
      \For{$t = 0, 1, \dots$}            \Comment{{Zero-shot Residual-MPPI}}
        \State $\boldsymbol{u}_t = \text{Residual-MPPI}(\boldsymbol{x}_t)$
        \State $\boldsymbol {x}_{t+1} \leftarrow \text{Environment}(\boldsymbol{x}_t, \boldsymbol{u}_t)$
      \EndFor

      \For{$t = 0, 1, \dots$}            \Comment{{Few-shot Residual-MPPI}}
        \State $\boldsymbol{u}_t = \text{Residual-MPPI}(\boldsymbol{x}_t)$
        \State $\boldsymbol {x}_{t+1} \leftarrow \text{Environment}(\boldsymbol{x}_t, \boldsymbol{u}_t)$
        \State $\mathcal{D} \leftarrow \mathcal{D} \cup (\boldsymbol{x}_t, \boldsymbol{u}_{t}, \boldsymbol{x}_{t+1})$  \Comment{\textit{Fine-tune with Online Data}}
        \State $\theta \leftarrow \arg\min_\theta \mathbb{E}_\mathcal{D} \left[ \sum_{t=0}^{T} \gamma^t (\boldsymbol{x}_{t+1} - F_\theta(\hat{\boldsymbol{x}}_t, \boldsymbol{u}_t))^2  \right]$
      \EndFor
      
    \end{algorithmic}
\end{algorithm}

\section{Implementation Details in MuJoCo} \label{sec:appendix-Implementation-mujoco}
All the experiments were conducted on 
Ubuntu 22.04 with 
Intel Core i9-9920X CPU @ 3.50GHz $\times$ 24, 
NVIDIA GeForce RTX 2080 Ti, 
and 125 GB RAM.

\subsection{MuJoCo Environment Configuration} \label{sec:appendix-expenv}
In this section, we introduce the detailed configurations of the selected environments, including the basic tasks, add-on tasks, and the corresponding rewards. The action and observation space of all the environments follow the default settings in \texttt{gym[mujoco]-v3}.
% All the environments terminate automatically after 1000 environmental steps. The state spaces are all organized in the form of positional values of different body parts of the agent, followed by the velocities of those individual parts (their derivatives). 

\textbf{Half Cheetah.} In the \texttt{HalfCheetah} environment, the basic goal is to apply torque on the joints to make the cheetah run forward (right) as fast as possible. 
The state and action space has 17 and 6 dimensions, and the action represents the torques applied between links.

The basic reward consists of two parts:
\begin{equation}
\begin{aligned}    
\mathrm{Forward \ Reward} &: r_\mathrm{forward}(\boldsymbol{x}_t, \boldsymbol{u}_t) = \frac{\Delta x}{\Delta t} \\
\mathrm{Control \ Cost} &: r_\mathrm{control}(\boldsymbol{x}_t, \boldsymbol{u}_t) = -0.1 \times \
||\boldsymbol{u}_t||_2^2 \\
\end{aligned}
\end{equation}

During policy customization, we demand an additional task that requires the cheetah to limit the angle of its hind leg. 
This customization goal is formulated as an add-on reward function defined as:
\begin{equation}
\begin{aligned}    
r_R(\boldsymbol{x}_t, \boldsymbol{u}_t) = -10 \times |\theta_{\mathrm{hind \ leg}}|
\end{aligned}
\end{equation}

\textbf{Hopper.} 
In the \texttt{Hopper} environment, the basic goal is to make the hopper move in the forward direction by applying torques on the three hinges connecting the four body parts.
The state and action space has 11 and 3 dimensions, and the action represents the torques applied between links.

The basic reward consists of three parts:
\begin{equation}
\begin{aligned}    
\mathrm{Alive \ Reward} &: r_\mathrm{alive} = 1 \\
\mathrm{Forward \ Reward} &: r_\mathrm{forward}(\boldsymbol{x}_t, \boldsymbol{u}_t) = \frac{\Delta x}{\Delta t} \\
\mathrm{Control \ Cost} &: r_\mathrm{control}(\boldsymbol{x}_t, \boldsymbol{u}_t) = -0.001 \times ||\boldsymbol{u}_t||_2^2 \\
\end{aligned}
\end{equation}
The episode will terminate if the z-coordinate of the hopper is lower than 0.7, or the angle of the top is no longer contained in the closed interval $[-0.2, 0.2]$, or an element of the rest state is no longer contained in the closed interval $[-100, 100]$.

During policy customization, we demand an additional task that requires the hopper to jump higher along the z-axis.
This customization goal is formulated as an add-on reward function defined as:
\begin{equation}
\begin{aligned}    
r_R(\boldsymbol{x}_t, \boldsymbol{u}_t) = 10 \times (z-1) \\
\end{aligned}
\end{equation}

\textbf{Swimmer.} 
In the \texttt{Swimmer} environment, the basic goal is to move as fast as possible towards the right by applying torque on the rotors.
The state and action space has 8 and 2 dimensions, and the action represents the torques applied between links.

The basic reward consists of two parts:
\begin{equation}
\begin{aligned}    
\mathrm{Forward \ Reward} &: r_\mathrm{forward}(\boldsymbol{x}_t, \boldsymbol{u}_t) = \frac{\Delta x}{\Delta t} \\
\mathrm{Control \ Cost} &: r_\mathrm{control}(\boldsymbol{x}_t, \boldsymbol{u}_t) = -0.0001 \times ||\boldsymbol{u}_t||_2^2 \\
\end{aligned}
\end{equation}

During policy customization, we demand an additional task that requires the agent to limit the angle of its first rotor.
This customization goal is formulated as an add-on reward function defined as:
\begin{equation}
\begin{aligned}    
r_R(\boldsymbol{x}_t, \boldsymbol{u}_t) =  -1 \times |\theta_{\mathrm{first \ rotor}}| \\
\end{aligned}
\end{equation}

\textbf{Ant.} 
In the \texttt{Ant} environment, the basic goal is to coordinate the four legs to move in the forward (right) direction by applying torques on the eight hinges connecting the two links of each leg and the torso (nine parts and eight hinges).
The state and action space has 27 and 8 dimensions, and the action represents the torques applied at the hinge joints.

The basic reward consists of three parts:
\begin{equation}
\begin{aligned}    
\mathrm{Alive \ Reward} &: r_\mathrm{alive} = 1 \\
\mathrm{Forward \ Reward} &: r_\mathrm{forward}(\boldsymbol{x}_t, \boldsymbol{u}_t) = \frac{\Delta x}{\Delta t} \\
\mathrm{Control \ Cost} &: r_\mathrm{control}(\boldsymbol{x}_t, \boldsymbol{u}_t) = -0.5 \times ||\boldsymbol{u}_t||_2^2 \\
\end{aligned}
\end{equation}
The episode will terminate if the z-coordinate of the torso is not in the closed interval $[0.2, 1]$. During experiments, we set the upper bound to $\inf$ for both prior policy training and planning experiments as it benefits both performances.

During policy customization, we demand an additional task that requires the ant to move along the y-axis. This customization goal is formulated as an add-on reward function defined as:
\begin{equation}
\begin{aligned}    
r_R(\boldsymbol{x}_t, \boldsymbol{u}_t) = \frac{\Delta y}{\Delta t} \\
\end{aligned}
\end{equation}

\subsection{RL Prior Policy Training}
The prior policies were constructed using Soft Actor-Critic (SAC) with the StableBaseline3~\citep{SB3} implementation. The training was conducted in parallel across 32 environments. The hyperparameters used for RL prior policies training are shown in Table~\ref{tbl: RL Training Hyperparameters}. Since this parameter setting performed poorly in the Swimmer task, we used the official benchmark checkpoint of Swimmer from StableBaseline3. Note that the prior policies do not necessarily need to be synthesized using RL. We report the experiment results with GAIL prior policies in Appendix~\ref{sec: IL results}.

\begin{table*}[t]
      \centering
      \caption{RL Prior Policy Training Hyperparameters}
      \scalebox{0.8}{
      \begin{tabular}{cc}
        \toprule
         Hyperparameter & Value \\
        \midrule
        % \textcolor{red}{Architecture} & \\
        Hidden Layers & $(256, 256)$\\
        Activation & $ReLu$       \\
        
        % \textcolor{red}{Optimization} & \\
        $\gamma$ & $0.99$               \\
        Target Update Interval & $50$  \\
        Learning Rate & $3e-4$         \\
        Gradient Step & $1$            \\
        Training Frequency & $1$       \\
        Batch Size & $256$             \\
        Optimizer & $Adam$             \\

        % \textcolor{red}{Replay Buffer} & \\ 
        Total Samples  & $6400000$ \\
        Capacity       & $1000000$ \\
        Sampling       & $Uniform$ \\

        \bottomrule
      \end{tabular}}
      \label{tbl: RL Training Hyperparameters}
\end{table*}

\subsection{Offline Dynamics Training}
The hyperparameters used for offline dynamics training are shown in Table~\ref{tbl: MuJoCo Dynamics Training Hyperparameters}. The exploration noise we utilized comes from the guassian distribution of the prior policy.
\begin{table*}[t]
      \centering
      \caption{MuJoCo Offline Dynamics Training Hyperparameters}
      \scalebox{0.8}{
      \begin{tabular}{cc}
        \toprule
         Hyperparameter & Value \\
        \midrule
        % \textcolor{red}{Architecture} & \\
        Hidden Layers & $(256, 256, 256, 256)$\\
        Activation & $Mish$       \\
        
        % \textcolor{red}{Optimization} & \\
        learning rate & $1e-5$ \\
        Training Frequency & $10$ \\
        Optimizer & $Adam$ \\
        Batch Size & $256$ \\
        Horizon & $8$ \\
        $\gamma$ & $0.9$ \\
        
        % \textcolor{red}{Replay Buffer} & \\ 
        Total Samples & $200000$ \\
        Capacity & $50000$ \\
        Sampling & $Uniform$ \\

        \bottomrule
      \end{tabular}}
      \label{tbl: MuJoCo Dynamics Training Hyperparameters}
\end{table*}

\subsection{Planning Hyperparameters}
The hyperparameters used for online planning are shown in Table~\ref{tbl: MuJoCo Planning Hyperparameters}. At each step, the planners computes the result based on the current observation. Only the first action of the computed action sequence is sent to the system.
\begin{table*}[h]
      \centering
      \caption{Planning Hyperparameter in MuJoCo Tasks}
      \scalebox{0.8}{
      \begin{tabular}{ccccccc}
        \toprule

         Hyperparameter & Half Cheetah & Ant & Swimmer & Hopper &  \\
        \midrule
        Horizon         & $2$     & $5$     & $5$    & $8$     \\
        Samples         & $10000$ & $5000$  & $5000$ & $10000$ \\
        Noise std.      & $0.017$ & $0.005$  & $0.02$ & $0.005$  \\
        $\omega^\prime$   & $1e-7$  & $1e-2$  & $1e-4$ & $2e-7$  \\
        $\gamma$          & $0.9$   & $0.9$   & $0.9$  & $0.9$  \\
        $\lambda$         & $5e-5$   & $5e-3$   & $1e-4$  & $1e-5$   \\

        \bottomrule
      \end{tabular}}
      \label{tbl: MuJoCo Planning Hyperparameters}
\end{table*}

% Though the selected $\omega^\prime$ in the algorithm is really small, note that they would finally be scaled by $\frac{1}{\lambda}$, the order of magnitude of $\log \pi$ is still large enough to effect the planning process. 
% The effect of relatively small $\omega^\prime$ and $\lambda$ setting is equivalent to enhance the effect of the add-on reward which could be also tuned by directly multiplying the add-on reward or full-reward by a larger factor.

\section{Implementation Details in GTS}
\label{sec:appendix-gts}
All the GTS experiments were conducted on PlayStation 5 (PS5) and 
Ubuntu 20.04 with 
12th Gen Intel Core i9-12900F $\times$ 24, 
NVIDIA GeForce RTX 3090, 
and 126 GB RAM.
GTS ran independently on PS5 with a fixed frequency of $60$Hz.
The communication protocol returned the observation and received the control input with a frequency of $10$Hz.

\subsection{GTS Environment Configuration}

The action and observation spaces follow the configuration of GT Sophy 1.0~\citep{Sophy}. The reward used for GT Sophy 1.0 training was a handcrafted linear combination of reward components computed on the transition between the previous and current states, which included course progress, off-course penalty, wall penalty, tire-slip penalty, passing bonus, any-collision penalty, rear-end penalty, and unsporting-collision penalty~\citep{Sophy}. During policy customization, we demand an additional task that requires the GT Sophy 1.0 to drive on course. This customization goal is formulated as an add-on reward function defined as:
\begin{equation}
\begin{aligned}    
r_R(\boldsymbol{x}_t, \boldsymbol{u}_t) = -1000000 \times ReLu(d_\mathrm{center}^2 - d_\mathrm{map}^2),
\end{aligned}
\end{equation}
where $d_\mathrm{center}$ is the distance from the vehicle to the course center and $ d_\mathrm{map}$ is the width of the course given the car's position.

\subsection{Dynamics Design}

We employed three main techniques to help us get an accurate dynamics model.

\textbf{Historical Observations} To address the partially observable Markov decision processes (POMDP) nature of the problem, we included historical observations in the input to help the model capture the implicit information.

\textbf{Splitting State Space} We divided the state space into two parts: the \textit{dynamic states} and \textit{map information}. We adopted a neural network with an MLP architecture to predict the \textit{dynamic states}. In each step, we utilized the trained model to predict the \textit{dynamic states} and leveraged the known map to calculate the \textit{map information} based on the \textit{dynamic states}.

\textbf{Physical Prior}
Some physical states in the \textit{dynamic states}, such as wheel load and slip angle, are intrinsically difficult to predict due to the limited observation. To reduce the variance brought by these difficult states, two neural networks were utilized to predict them and other states independently.

Table~\ref{tbl: GTS Dynamics Training Hyperparameters} shows the hyperparameters used for training these two MLPs to predict \textit{dynamic states}.
\begin{table*}[htbp]
      \centering
      \caption{GTS Offline Dynamics Training Hyperparameters}
      \scalebox{0.8}{
      \begin{tabular}{cc}
        \toprule
         Hyperparameter & Value \\
        \midrule
        History Length & $8$              \\
        Hidden Layers & $(2048, 2048, 2048)$ \\
        Activation & $Mish$               \\
        Learning Rate & $1e-5$            \\
        Training Steps & $200000$         \\
        Training Frequency & $5$          \\
        Batch Size & $256$                \\
        Horizon & $5$ \\
        $\gamma$ & $1$ \\
        Optimizer & $Adam$                \\
        Capacity       & $2000000$        \\
        Sampling       & $Uniform$        \\

        \bottomrule
      \end{tabular}}
      \label{tbl: GTS Dynamics Training Hyperparameters}
\end{table*}

\subsection{Planning Hyperparameters}
To stabilize the planning process, we further utilized another hyperparameter $top ratio$ to select limited action sequence candidates with top-tier accumulated reward. The hyperparameters used for online planning are shown in Table~\ref{tbl: GTS Planning Hyperparameters}. At each step, the planners computed the result based on the current observation. Only the first action of the computed action sequence was sent to the system.
\begin{table*}[h]
      \centering
      \caption{Planning Hyperparameter in GTS}
      \scalebox{0.8}{
      \begin{tabular}{cccccccc}
        \toprule
         Hyperparameter & Horizon & Samples & Noise std. & Top Ratio &$\omega^\prime$ & $\gamma$ & $\lambda$ \\
        \midrule
        Value          & $15$     & $500$     & $0.035$ & $0.048$   & $3$ & $0.8$ & $0.5$    \\
        \bottomrule
      \end{tabular}}
      \label{tbl: GTS Planning Hyperparameters}
\end{table*}

\subsection{Residual-SAC Training}
In addition to the Residual-SAC algorithm, we adopted the idea of residual policy learning~\citep{silver2018residual} to learn a policy that corrects the action of GT Sophy 1.0 by outputting an additive action, in which the initial policy was set as GT Sophy 1.0. As proposed in the same paper, the weights of the last layer in the actor network were initiated to be zero. The randomly initialized critic was trained alone with a fixed actor first. And then, both networks were trained together. The pipeline was developed upon the official implementation of RQL~\citep{RQL}. The training hyperparameters are shown in Table~\ref{tbl: GTS Residual-SAC Parameters}.

\begin{table*}[htbp]
      \centering
      \caption{Residual-SAC Training Hyperparameters}
      \scalebox{0.8}{
      \begin{tabular}{cc}
        \toprule
         Hyperparameter & Value \\
        \midrule
        Hidden Layers & $(2048, 2048, 2048)$ \\
        Activation & $ReLu$        \\
        Learning Rate & $1e-4$          \\
        Target Update Interval & $50$   \\
        Gradient Step & $1$             \\
        Training Frequency & $1$        \\
        Batch Size & $256$              \\
        Optimizer & $Adam$              \\
        $\gamma$ & $0.9$                \\
        Total Samples & $10000000$ \\
        Capacity       & $1000000$ \\
        Sampling       & $Uniform$ \\

        \bottomrule
      \end{tabular}}
      \label{tbl: GTS Residual-SAC Parameters}
\end{table*}

%%%%%%%%%%%%%%%%%%%%%%%%%%%%%%%%%%%%%%%%%%%%%%%%%%%%%%%%%%%%%%%%%%%%%%%%%%%%%%%%%%%%%%%%%%%%%%%%%%%%%%%%%%%%%%%%%%%%%%

\section{Visualization}

\subsection{MuJoCo Experiments}
\label{sec: mujoco visualization}
We visualize some representative running examples from the MuJoCo environment in Figure~\ref{fig: MuJoCoVisual}. As shown in the plot, the customized policies achieved significant improvements over the prior policies in meeting the objectives of the add-on tasks.

\begin{figure}[ht]
    \centering
    \includegraphics[scale=0.65]{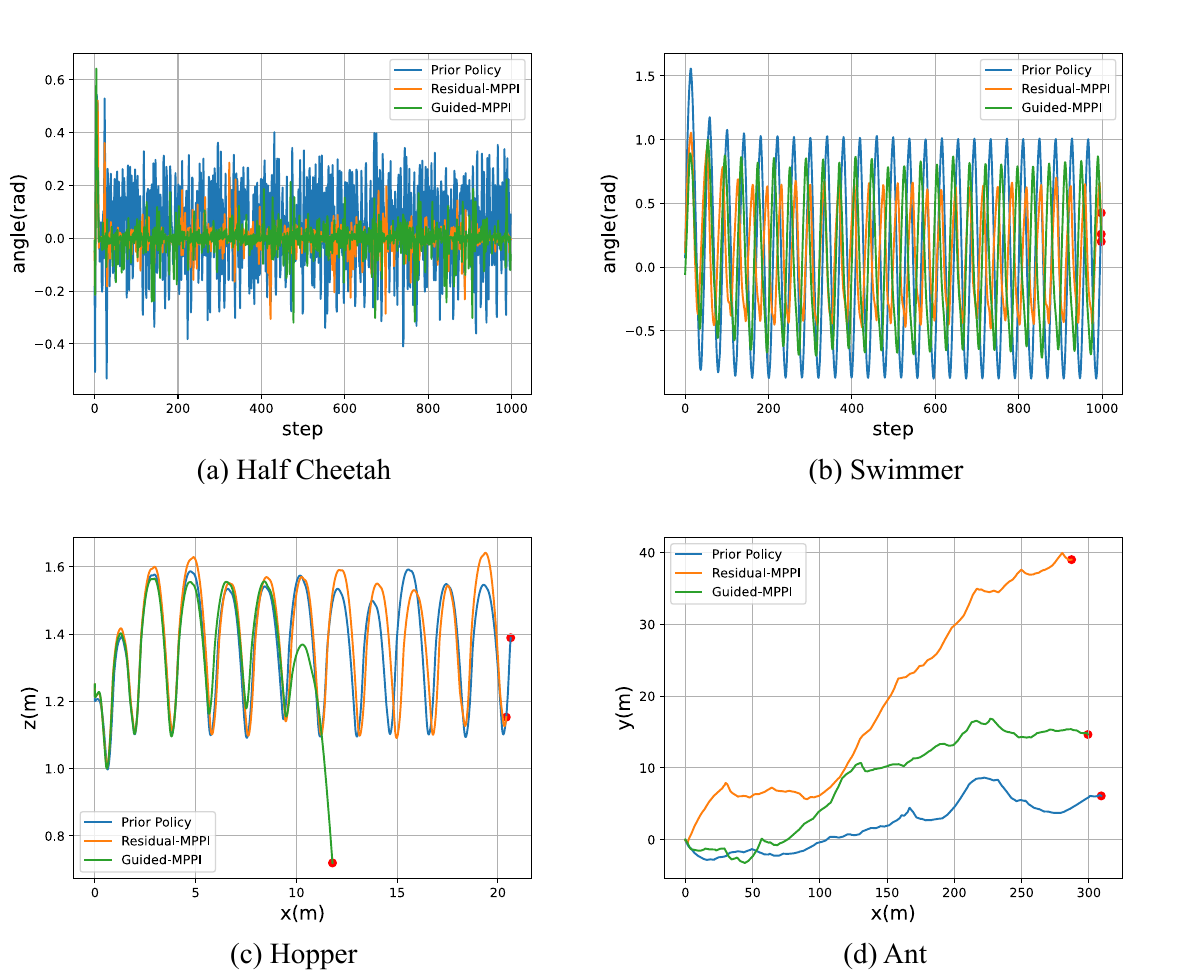}
    \caption{(a) The angle of the Half Cheetah's hind leg vs. the environmental steps. (b) The angle of the Swimmer's first rotor vs. the environmental steps. (c) The trajectory of the Hopper robot on the $x$ and $z$ axis.(d) The trajectory of the Ant robot on the $x$ and $y$ axis.}
    \label{fig: MuJoCoVisual}
\end{figure}

\subsection{GTS Experiments}
\label{sec: gts visualization}

We provide four typical complete trajectories of all policies in Figure~\ref{fig: complete trajectories}. Also, we visualize the sim-to-sim experiment results in Figure~\ref{fig: sim2sim complete trajectories}. The visualization clearly demonstrates the effectiveness of the proposed method by greatly reducing the off-course steps of a champion-level driving policy.

\begin{figure}[htbp]
    \centering
    \includegraphics[scale=0.7]{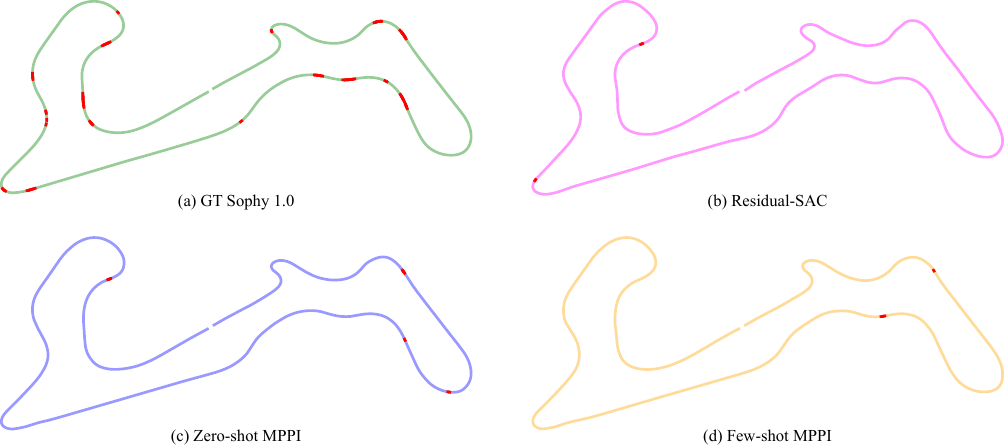}
    \caption{Typical complete trajectories of all policies, where the red parts indicate off-course behaviours. 
    (a) The trajectory of GT Sophy 1.0. It finishes the lap in $117.762$s with $93$ steps off course. 
    (b) The trajectory of Residual-SAC. It finishes the lap in $131.078$s with $2$ steps off course. 
    (c) The trajectory of Zero-shot MPPI. It finishes the lap in $123.551$s with $10$ steps off course. 
    (d) The trajectory of Few-shot MPPI. It finishes the lap in $122.919$s with $4$ steps off course. }
    \label{fig: complete trajectories}
\end{figure}

\begin{figure}[htbp]
    \centering
    \includegraphics[scale=0.7]{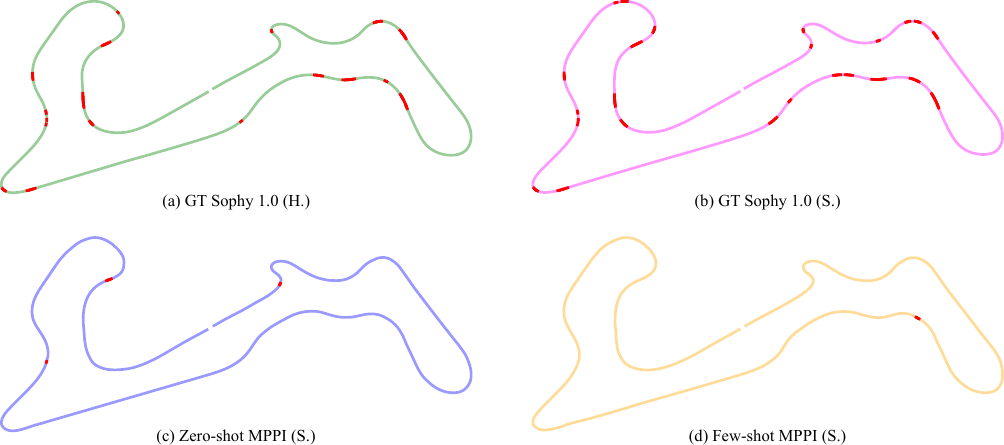}
    \caption{Typical complete trajectories of all policies in Sim2Sim experiments, where the red parts indicate off-course behaviours. 
    (a) The trajectory of GT Sophy 1.0 with Race-Hard. It finishes the lap in $117.762$s with $93$ steps off course. 
    (b) The trajectory of Residual-SAC with Race-Soft. It finishes the lap in $116.674$s with $133$ steps off course. 
    (c) The trajectory of Zero-shot MPPI with Race-Soft. It finishes the lap in $123.550$s with $9$ steps off course. 
    (d) The trajectory of Few-shot MPPI with Race-Soft. It finishes the lap in $122.379$s with $2$ steps off course. }
    \label{fig: sim2sim complete trajectories}
\end{figure}

We visualize the racing line selected by each policy at four typical corners to illustrate the differences among policies. As shown in Figure~\ref{fig: route selection}, GT Sophy 1.0 exhibited aggressive racing lines that tend to be off course. Both Zero-shot MPPI and Few-shot MPPI were able to customize the behavior to drive more on the course, while the Few-shot MPPI chose a better racing line. In contrast, the Residual-SAC yields to an overly conservative behavior and keeps driving in the middle of the course.

Meanwhile, we visualize the difference between the actions selected by GT Sophy 1.0 and Residual-MPPI at each time step in Figure~\ref{fig: additive}. When driving at the corner, our method exerted notable influences on the prior policy, as shown in Figure~\ref{fig: additive} (a). When driving at the straight, our method exerted minimal influences on the prior policy since most forward-predicted trajectories remain on course, as shown in Figure~\ref{fig: additive} (b).

\begin{figure}[htbp]
    \centering
    \includegraphics[scale=0.8]{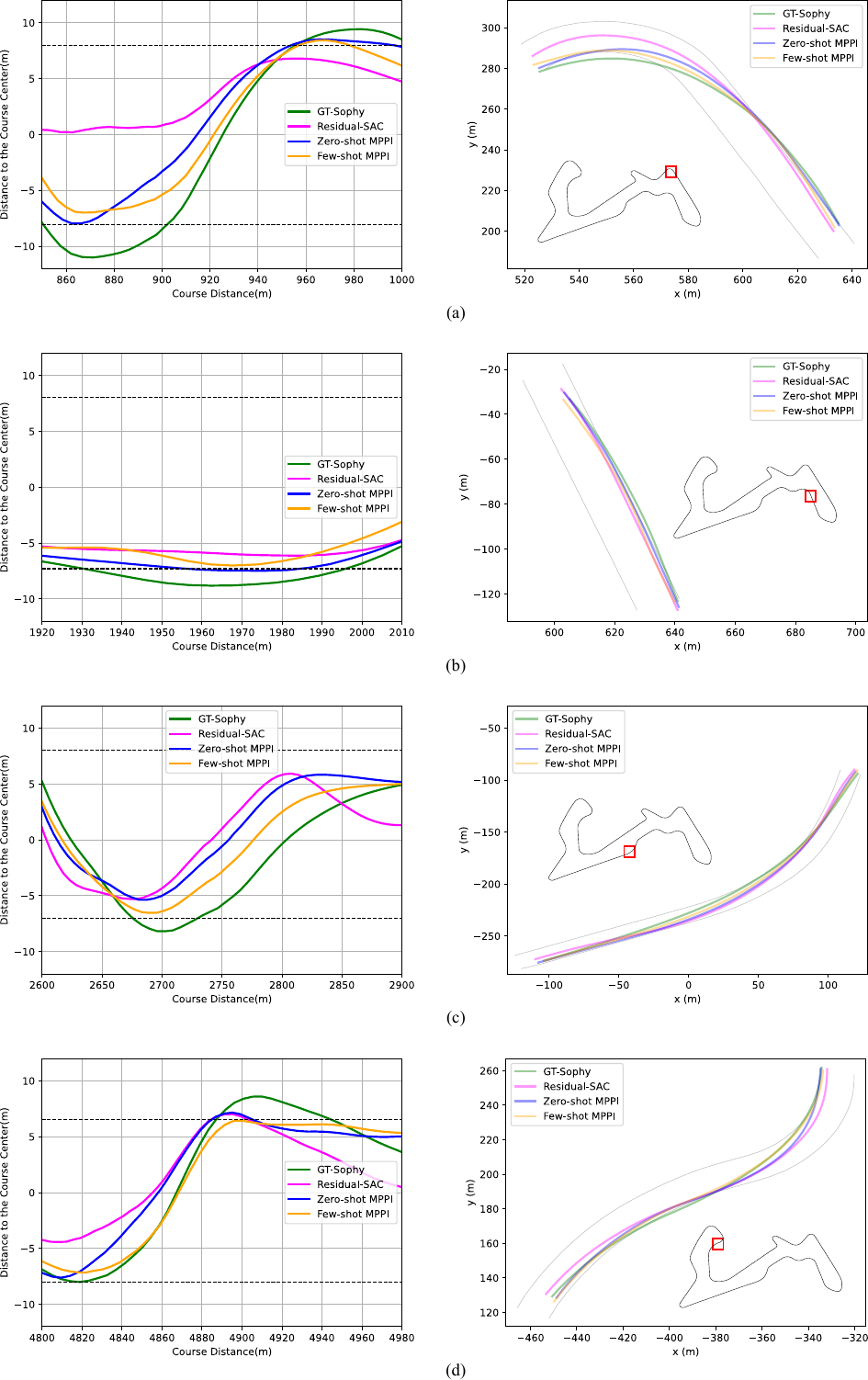}
    \caption{Route Selection Visualization at Different Cases. Different colors denote the different policies. In all cases, Residual-MPPI are able to customize the behavior to drive more on course. In (a), (b) and (d), due to inaccurate dynamics, Zero-shot MPPI exhibits off-course behavior. In (c), although Few-shot MPPI and Zero-shot MPPI both drive on course, Few-shot MPPI tends to select a better route closer to the boundary with more accurate dynamics.}
    \label{fig: route selection}
\end{figure}

% We only deviate from the action when the off-course behavior is detected.
\begin{figure}[htbp]
    \centering
    \includegraphics[scale=1]{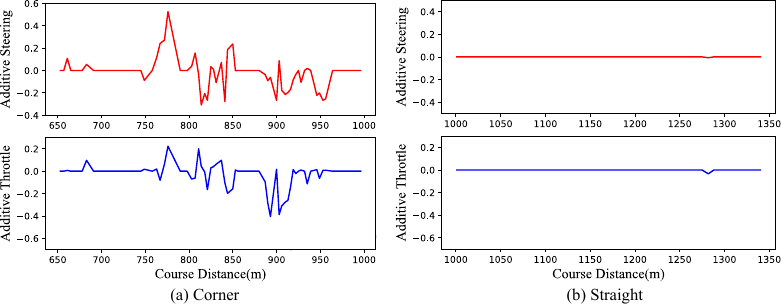}
    \caption{Additive Action of Residual-MPPI at Different Case. The additive throttle and steering are all linear normalized to $-1$ to $1$.}
    \label{fig: additive}
\end{figure}

%%%%%%%%%%%%%%%%%%%%%%%%%%%%%%%%%%%%%%%%%%%%%%%%%%%%%%%%%%%%%%%%%%%%%%%%%%%%%%%%%%%%%%%%%%%%%%%%%%%%%%%%%%%%%%%%%%%%%%
\section{Additional Experimental Results}

\subsection{Residual-MPPI with IL prior}
\label{sec: IL results}
Residual-MPPI is applicable to any prior policies trained based on the maximum-entropy principle, not limited to RL methods. In this section, we conduct additional experiments upon the same MuJoCo environments with IL prior policies. The IL prior policies are obtained through Generative Adversarial Imitation Learning (GAIL) with expert data generated by the RL prior in the previous experiments. 
The hyperparameters used for training the IL prior policies are shown in Table~\ref{tbl:  GAIL Parameters}, while the hyperparameters used for the learners are the same as the corresponding RL experts.
\begin{table*}[htbp]
      \centering
      \caption{Hyperparameters of GAIL Imitated Polices}
      \scalebox{0.8}{
      \begin{tabular}{ccccc}
        \toprule
         Hyperparameter & \textit{Half Cheetah} & \textit{Ant} & \textit{Swimmer} & \textit{Hopper}   \\
        \midrule
        expert\_min\_episodes           & $1000$ & $100$ & $1000$ & $2000$ \\
        demo\_batch\_size               & $512$ & $25000$ & $1024$ & $1024$ \\
        gen\_replay\_buffer\_capacity   & $8192$ & $50000$ & $2048$ & $2048$ \\
        n\_disc\_updates\_per\_round    & $4$ & $4$ & $4$ & $4$ \\

        \bottomrule
      \end{tabular}}
      \label{tbl:  GAIL Parameters}
\end{table*}

The results are summarized in Table~\ref{tbl: IL Prior Customization Results}. Similar to the experiment results with the RL priors, the customized policies achieved significant improvements over the prior policies in meeting the objectives of the add-on tasks, which demonstrates the applicability of Residual-MPPI with IL priors.

\begin{table}[h]
\caption{Experimental Results of Zero-shot Residual-MPPI with IL Prior in MuJoCo} \label{tbl: IL Prior Customization Results}
\resizebox{\textwidth}{!}{
\begin{threeparttable}
\begin{tabular}{@{}clcccc@{}}
\toprule
\multirow{2}{*}{Env.} & \multicolumn{1}{c}{\multirow{2}{*}{Policy}} & Full Task  & {Basic Task} & \multicolumn{2}{c}{Add-on Task} \\ \cmidrule(l){3-6}
 & & $\mathrm{Total\ Reward}$ & $\mathrm{Basic\ Reward}$ &  $\bar{|\theta|}$ & $\mathrm{Add}$-$\mathrm{on\ Reward}$ \\ \midrule
\multirow{8}{*}{\begin{tabular}[c]{@{}c@{}}Half\\ Cheetah\end{tabular}} 
& Prior Policy & $-971.3 \pm 689.1$ & $2288.9 \pm 725.5$ &  $0.33\pm 0.01$ &  $-3260.3\pm 93.5$ \\ \cmidrule(l){2-6} 
& Greedy-MPPI   & $\mathbf{-408.5 \pm 247.4}$ & $2397.9 \pm 285.3$ & $\mathbf{0.28 \pm 0.01}$ & $\mathbf{-2806.4 \pm 141.3}$\\
& Full-MPPI & $-3569.7 \pm 353.9$ & $-1164.4 \pm 152.0$ & $0.24 \pm 0.03$ & $-2415.2 \pm 341.3$   \\
& Guided-MPPI & $-620.3 \pm  217.6$ & $2236.3 \pm 219.2 $ & $0.29 \pm 0.01$ & $-2856.6 \pm 93.6$ \\
& Valued-MPPI & $-643.7 \pm  224.3$ & $\mathbf{2231.7 \pm 202.9} $ & $0.29 \pm 0.01$ & $-2875.5 \pm 79.9$ \\
& Residual-MPPI   & $\mathbf{-415.2 \pm 265.5}$ & $2383.7 \pm 303.6$ & $\mathbf{0.28 \pm 0.01}$ & $\mathbf{-2798.9 \pm 145.1}$\\ \cmidrule(l){2-6} 
& Residual-SAC (200K)   & $ -356.0 \pm 77.0 $ & $ 703.6 \pm 71.5 $ & $ 0.11 \pm 0.00 $ & $ -1059.7 \pm 30.1 $\\
& Residual-SAC (4M)   & $2111.5 \pm 79.1$ & $ 2382.3 \pm 78.0 $ & $0.03 \pm 0.00$ & $-270.8 \pm 7.3$\\

\midrule 
                          
\multicolumn{1}{c}{Env} & \multicolumn{1}{c}{Policy} & $\mathrm{Total\ Reward}$ & $\mathrm{Basic\ Reward}$ & $\bar{|\theta|}$ & $\mathrm{Add}$-$\mathrm{on\ Reward}$ \\ \midrule
\multirow{8}{*}{Swimmer}
& Prior Policy  & $-344.5 \pm 3.3$ &  $328.2 \pm 1.5$ &$0.67  \pm 0.00$  &  $-672.7 \pm 1.9$  \\  \cmidrule(l){2-6} 
& Greedy-MPPI   & $\mathbf{-35.1 \pm 7.1}$ & $232.7 \pm 3.8$ & $\mathbf{0.27 \pm 0.01}$ & $\mathbf{-267.8 \pm 7.3}$\\
& Full-MPPI     & $ -1685.3 \pm 108.1$ & $13.5 \pm 7.5$ & $1.70 \pm 0.11$   & $-1698.8 \pm  107.4$     \\ 
& Guided-MPPI   & $-122.6 \pm 6.9$ & $222.1 \pm 4.8$ & $0.34 \pm  0.01$ & $-344.7 \pm 7.6$    \\ 
& Valued-MPPI   & $-157.4 \pm  6.5$ & $\mathbf{243.8 \pm 5.0} $ & $0.40 \pm 0.01$ & $-401.3 \pm 8.1$ \\
& Residual-MPPI & $\mathbf{-35.1 \pm  6.7}$ & $232.8 \pm  4.2$ & $\mathbf{0.27 \pm  0.01}$  & $\mathbf{-267.9 \pm 7.2}$ \\ \cmidrule(l){2-6} 
& Residual-SAC (200K)   & $-168.1 \pm 113.8$ & $-0.4 \pm 18.1$ & $0.17 \pm 0.11$ & $-167.7 \pm 114.7$\\
& Residual-SAC (4M)   & ${-9.3 \pm 15.4}$ & $-0.1 \pm 14.3$ & ${0.01\pm 0.01}$ & ${-9.2 \pm 7.3}$\\

\midrule
                           
\multicolumn{1}{c}{Env.} & \multicolumn{1}{c}{Policy} & $\mathrm{Total\ Reward}$ & $\mathrm{Basic\ Reward}$ & $\bar{z}$ & $\mathrm{Add}$-$\mathrm{on\ Reward}$ \\ \midrule
\multirow{8}{*}{Hopper} 
& Prior Policy & $6790.9 \pm 40.2$  & $3439.8 \pm 12.7$ & $1.34 \pm 0.00$ & $3351.1 \pm 29.4$ \\ \cmidrule(l){2-6} 
& Greedy-MPPI   & $\mathbf{6881.8 \pm 180.6}$ & $3423.9 \pm 73.5$ & $\mathbf{1.35 \pm 0.00}$ & $\mathbf{3457.9 \pm 109.5}$\\ 
& Full-MPPI    & $20.7 \pm 3.2$ & $3.6 \pm 0.7$ & $1.24 \pm 0.00$ & $17.1 \pm 2.6$ \\ 
& Guided-MPPI  & $6793.8 \pm 301.1$ & $3422.2 \pm  122.1$ & $1.34 \pm 0.01$ & $3370.8 \pm 183.1$ \\
& Valued-MPPI   & $6832.2 \pm  179.6$ & $\mathbf{3434.7 \pm 73.3} $ & $1.34 \pm 0.00$ & $3397.4 \pm 108.3$ \\
% & Residual-MPPI & $\mathbf{7383.64 \pm 192.56}$ & $3553.47 \pm  53.39$ & $\mathbf{1.38 \pm 0.01}$  & $\mathbf{3830.17 \pm 150.91}$ \\ \cmidrule(l){2-6} 
& Residual-MPPI & $\mathbf{6902.8 \pm 41.0}$ & $3430.8 \pm  12.8$ & $\mathbf{1.35 \pm 0.00}$  & $\mathbf{3472.0 \pm 30.7}$ \\ \cmidrule(l){2-6} 
& Residual-SAC (200K)   & $5993.2 \pm 1327.7$ & $3019.0 \pm 627.6$ & $1.35 \pm 0.02$ & $2974.1 \pm 709.0$\\
& Residual-SAC (4M)   & ${7077.2 \pm 514.9}$ & $ 3445.9 \pm 229.2 $ & ${1.37 \pm 0.00}$ & ${3631.3 \pm 287.0}$\\
\midrule               

\multicolumn{1}{c}{Env} & \multicolumn{1}{c}{Policy} & $\mathrm{Total\ Reward}$ & $\mathrm{Basic\ Reward}$ &  $\bar{v}_y$ & $\mathrm{Add}$-$\mathrm{on\ Reward}$ \\ \midrule
\multirow{8}{*}{Ant} 
& Prior Policy & $4169.1 \pm 1468.3$  & $4782.9 \pm 1639.1$ & $-0.61 \pm 0.22$ & $-613.7 \pm 216.4$ \\ \cmidrule(l){2-6} 
& Greedy-MPPI   & $\mathbf{4518.3 \pm 1755.3}$ & $4018.8 \pm 1597.1$ & $\mathbf{0.50 \pm 0.18}$ & $\mathbf{499.4 \pm 181.6}$\\ 
& Full-MPPI    & $-2780.7 \pm 153.7$ & $-2774.4 \pm 110.1$ & $-0.01 \pm 0.11$ & $-6.2 \pm 108.5$ \\
& Guided-MPPI   & $3041.5 \pm 2111.2$ & $3098.6 \pm 2142.3$  & $-0.06 \pm 0.14$ & $-57.1 \pm 139.4$\\
& Valued-MPPI   & $4321.5 \pm  1848.7$ & $\mathbf{4727.7 \pm 1995.1} $ & $-0.41 \pm 0.22$ & $-406.2 \pm 222.6$ \\
& Residual-MPPI  & $\mathbf{4877.6 \pm 1648.4}$ & $4565.9 \pm 1560.6$ & $\mathbf{0.31 \pm 0.14}$ & $\mathbf{311.6 \pm 137.2}$ \\ \cmidrule(l){2-6} 
& Residual-SAC (200K)   & $-180.3 \pm 20.4$ & $-181.5 \pm 20.4$ & $0.00 \pm 0.00$ & $1.1 \pm 1.1$\\
& Residual-SAC (4M)   & ${6016.9 \pm 1001.8}$ & $3836.1 \pm 705.3$ & ${2.18 \pm 0.30}$ & ${2180.8 \pm 303.7}$\\

\bottomrule
\end{tabular}
    \begin{tablenotes}
        \small 
        \item The evaluation results are in the form of $\mathrm{mean} \pm \mathrm{std}$ over the 500 running episodes. The total reward is calculated on full task, whose reward is $\omega r + r_R$.
    \end{tablenotes}
\end{threeparttable}
}
% \vspace{-1.5em}
\end{table}

\subsection{Ablation in MuJoCo}
\label{sec: mujoco ablation}
We conducted ablation studies for Guided-MPPI and Residual-MPPI in MuJoCo with 100 episodes for each selected setting, whose evaluation logs are available in the supplementary materials.

\textbf{Horizon.} Regarding the question of why Guided-MPPI baseline underperforms the proposed approach, as we explained in the main text, while Guided-MPPI improves the estimation of optimal actions through more efficient sampling, it remains hindered by its ability to account for the long-term impacts of actions within finite planning horizons. In contrast, Residual-MPPI implicitly inherits the information of the basic reward through the prior policy log-likelihood. And the prior policy log-likelihood is informed by the value functions optimized over an infinite horizon or demonstrations.
Therefore, in our hypothesis, a longer horizon will benefit more on Guided-MPPI compared to Residual-MPPI. As shown in Figure~\ref{fig: HorizonAblation}, the performance of each algorithm matches our hypothesis. It is worth noting that Guided-MPPI still underperforms Residual-MPPI in all cases except in HalfCheetah. Meanwhile, in some environments, an overlong horizon decreases the performance. These results reflect the limitation of the planning-based methods with inaccurate dynamics. 

\textbf{Samples.} The number of samples would have a lower impact as it primarily controls the sufficiency of optimization. When the number exceeds the threshold required for the optimization, further performance improvement can hardly be observed. From the results shown in Figure~\ref{fig: SamplesAblation}, we can see that the chosen number of samples is already sufficient for each optimization. However, the performance of Guided-MPPI in Hopper and Ant reduces with more samples, which is due to the error from the terminal value estimation. In this situation, Residual-MPPI demonstrates strong stability.

\textbf{Temperature.} The temperature controls the algorithm's tendency to average various actions or only focus on the top-tier actions. As shown in the Figure~\ref{fig: LambdaAblation}: a higher temperature can potentially lead to instability, as observed in the Hopper and Swimmer tasks; however, it can also allow more actions to be adequately considered, improving performance, as observed in the Ant task.

\textbf{Noise Std.} The Noise Std. parameter controls the magnitude of action noise, affecting the algorithm's exploration performance. As shown in Figure~\ref{fig: NoiseAblation}: a small noise would hinder the policy from exploring better actions, as observed in the Swimmer task; however, it can also increase stability, leading to higher overall performance, as observed in the Hopper task.

\textbf{Omega.} Theoretically, $\omega^\prime$ represents the trade-off between the basic task and the add-on task. As shown in Figure~\ref{fig: wAblation}: in complex tasks that the add-on reward is orthogonal to the basic reward (e.g., Ant), a larger $\omega^\prime$ leads to behaviors close to the prior policy and higher basic reward; and a smaller $\omega^\prime$ leads to behaviors close to the Greedy-MPPI and higher add-on reward but lower total reward.

% In the first three tasks, where the add-on task are more "aligned" with the basic goal, like moving forward with less use of one joint (HalfCheetah, Swimmer) or jumping higher while jumping forward (Hopper), 
% this biased characteristic does not lead to severe consequences and may even achieve better performance due to a simpler optimization objective and reduced reliance on biased dynamics. 

% Nevertheless, 
% Practically, Residual-MPPI provides a tunable parameter, enabling it to adapt to more diverse addon tasks. This was directly validated in the GTS experiments, where Greedy-MPPI would lead to failure.

\begin{figure}[h]
    
    \centering
    \includegraphics[scale=0.2]{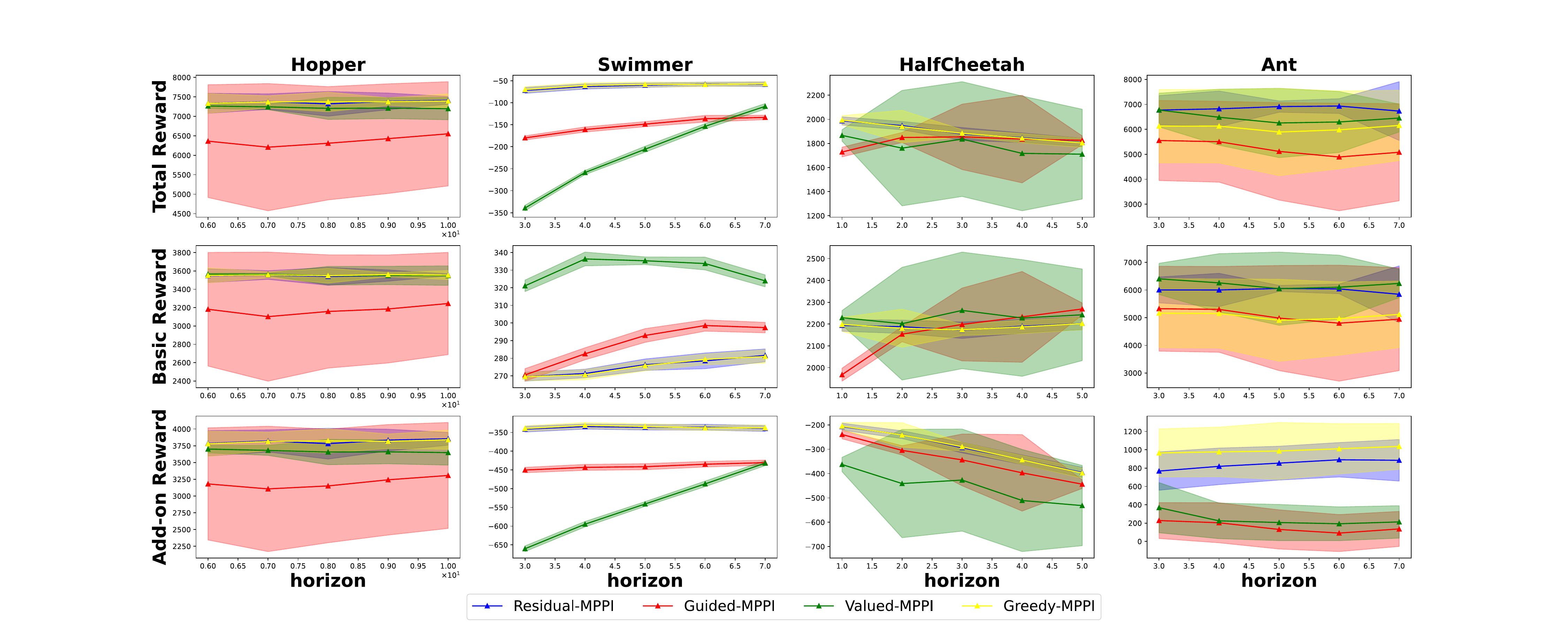}
    \caption{Ablation Study of Horizon in MuJoCo}
    \label{fig: HorizonAblation}
\end{figure}

% On the other hand, the number of samples would have a lower impact as it primarily controls the sufficiency of optimization. When the number exceeds the threshold required for the optimization problem, further performance improvement can hardly be observed. From the most ablation results, we could see that the chosen number of samples is already sufficient for each optimization problem. 

% However, the Guided-MPPI in Hopper and Ant demonstrate a different interesting trend. This may be due to the suboptimality of various components in the algorithm creating a biased optimization problem, where fully optimizing it leads to reduced performance in the actual task. In this situation, Residual-MPPI still demonstrated exceptionally strong stability.

\begin{figure}[h]
    
    \centering
    \includegraphics[scale=0.2]{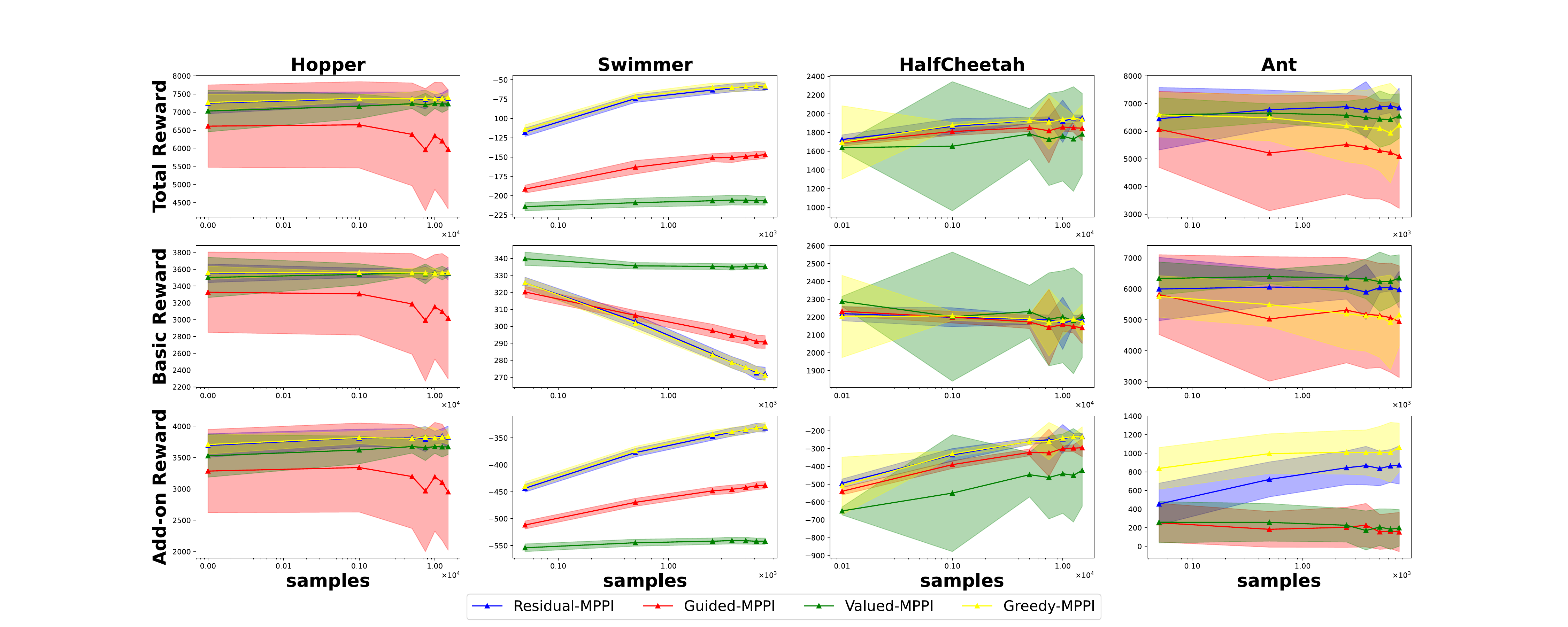}
    \caption{Ablation Study of Samples in MuJoCo}
    \label{fig: SamplesAblation}
\end{figure}

% The temperature parameter controls the algorithm's tendency to average various actions or only focus on the top-tier actions. A higher temperature can potentially lead to instability, as observed in the Hopper and Swimmer tasks; however, it can also allow more actions to be adequately considered, improving performance, as observed in the Ant task.

\begin{figure}[h]
    \centering
    \includegraphics[scale=0.2]{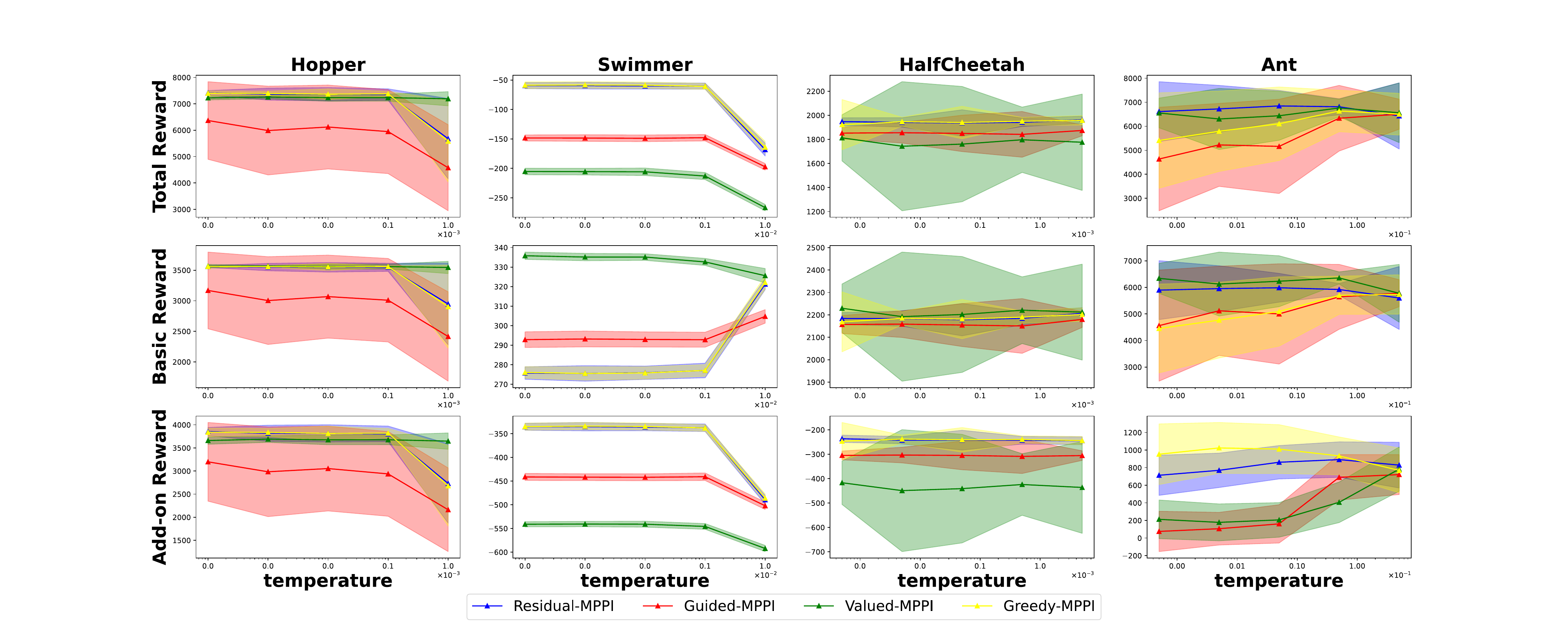}
    \caption{Ablation Study of Tempreture in MuJoCo}
    \label{fig: LambdaAblation}
\end{figure}

% The Noise Std. parameter controls the magnitude of action noise, affecting the algorithm's exploration performance. A small noise would hinder the policy from exploring better actions, as observed in the Swimmer task; however, it can also increase stability, leading to higher overall performance, as observed in the Hopper and Ant task.

\begin{figure}[h]
    \centering
    \includegraphics[scale=0.20]{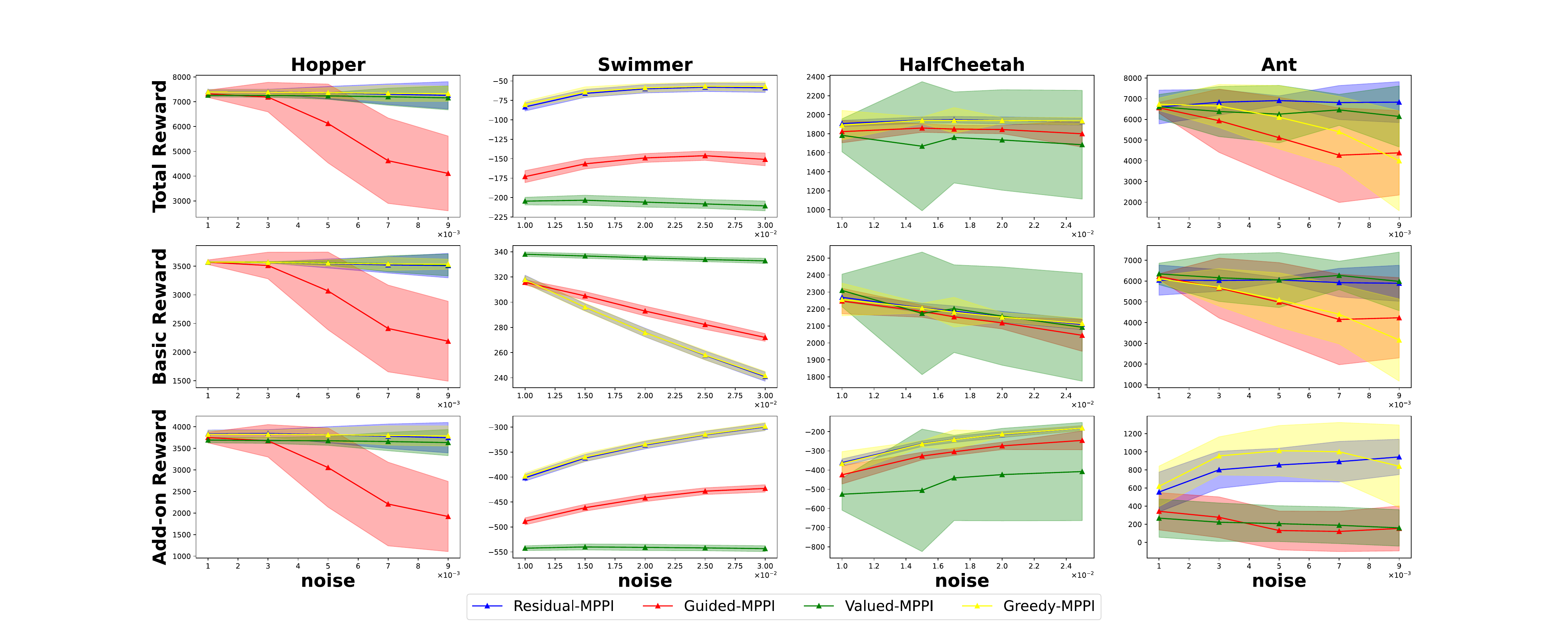}
    \caption{Ablation Study of Noise Std. in MuJoCo}
    \label{fig: NoiseAblation}
\end{figure}

\begin{figure}[h]
    \centering
    \includegraphics[scale=0.2]{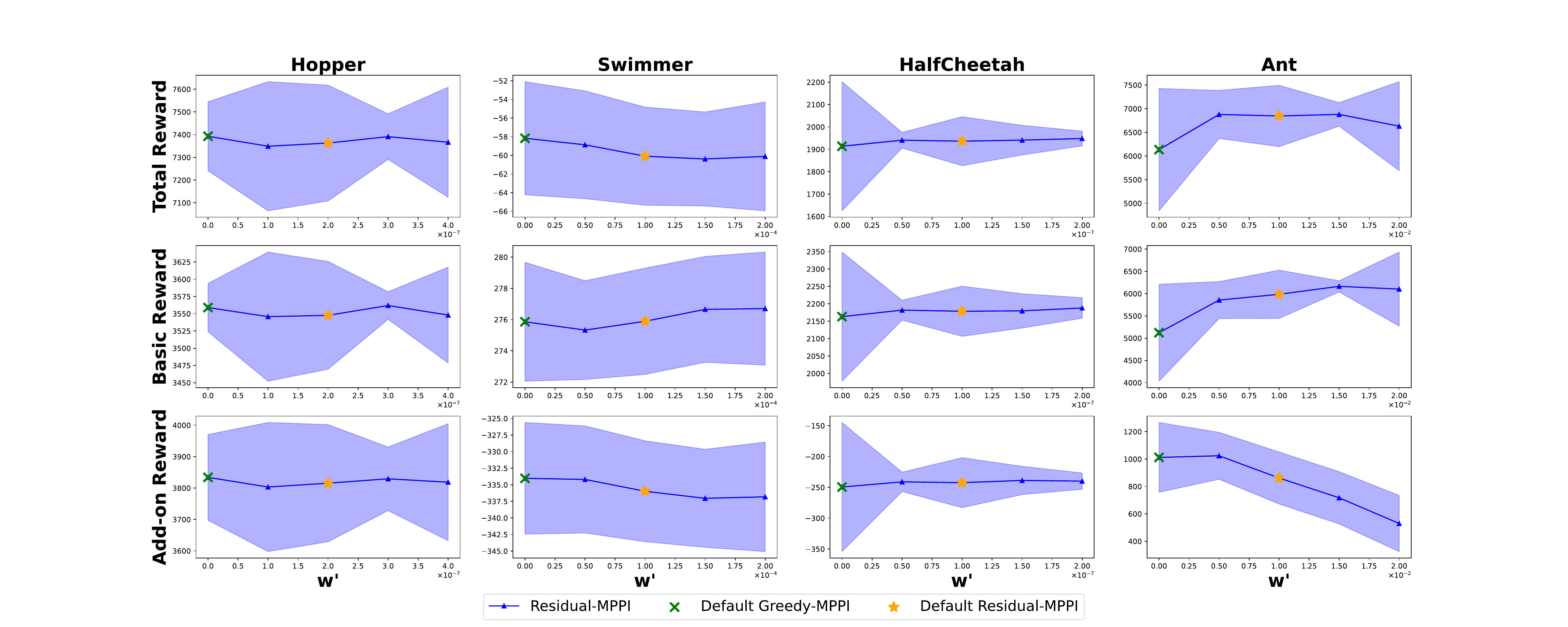}
    \caption{Ablation Study of $\omega^\prime$ in MuJoCo}
    \label{fig: wAblation}
\end{figure}

% *** GTS Horizon & Samples Results
\subsection{Ablation in GTS}
\label{sec: gts ablation}

\begin{figure}[h]
    \centering
    \includegraphics[scale=0.2]{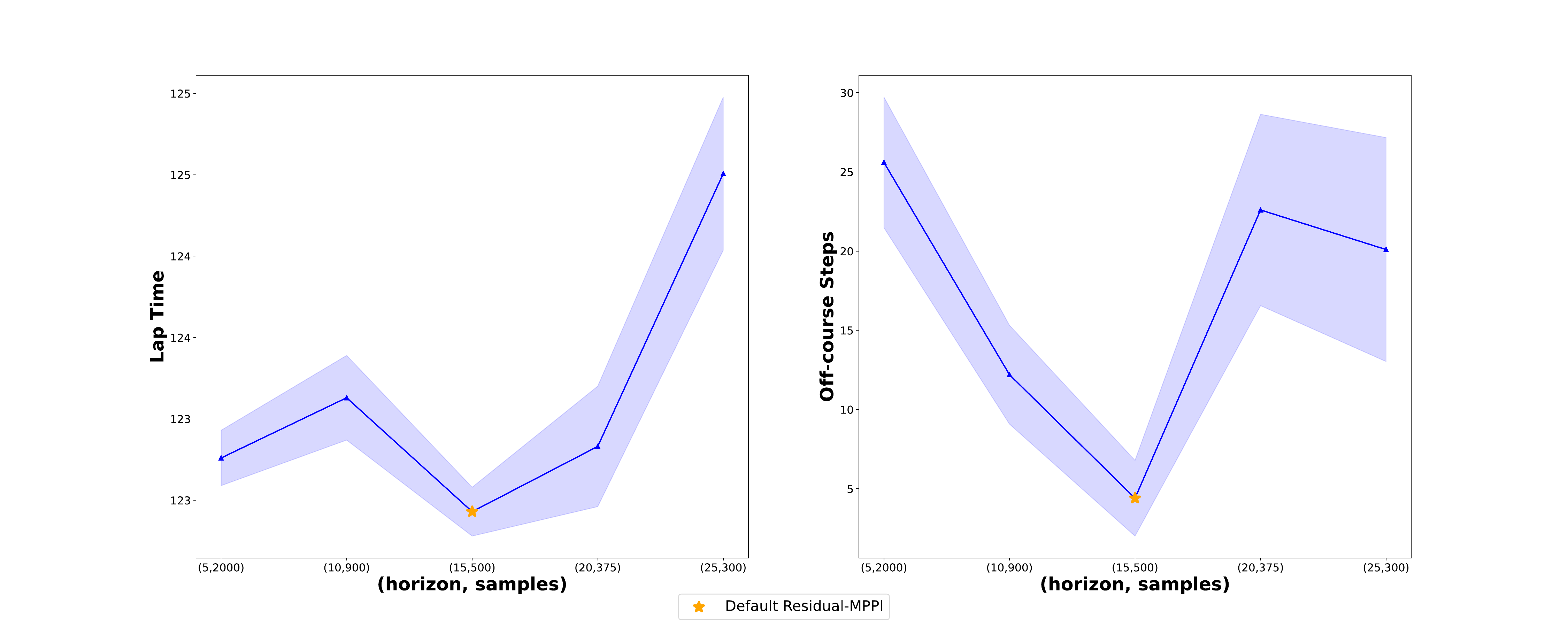}
    \caption{Ablation Study of Horizon and Samples in GTS}
    \label{fig: GTSHorizonAblation}
\end{figure}

We also conducted the corresponding ablation studies for Few-shot MPPI in GTS with 10 laps for each selected setting. Due to the strict experimental requirement of a 10Hz control frequency, we found the maximum number of samples that satisfies this requirement for each chosen horizon and conducted experiments. The results shown in Figure~\ref{fig: GTSHorizonAblation}, Figure~\ref{fig: GTSLambdaAblation}, and Figure~\ref{fig: GTSNoiseAblation} match our hypothesis.

\begin{figure}[h]
    \centering
    \includegraphics[scale=0.2]{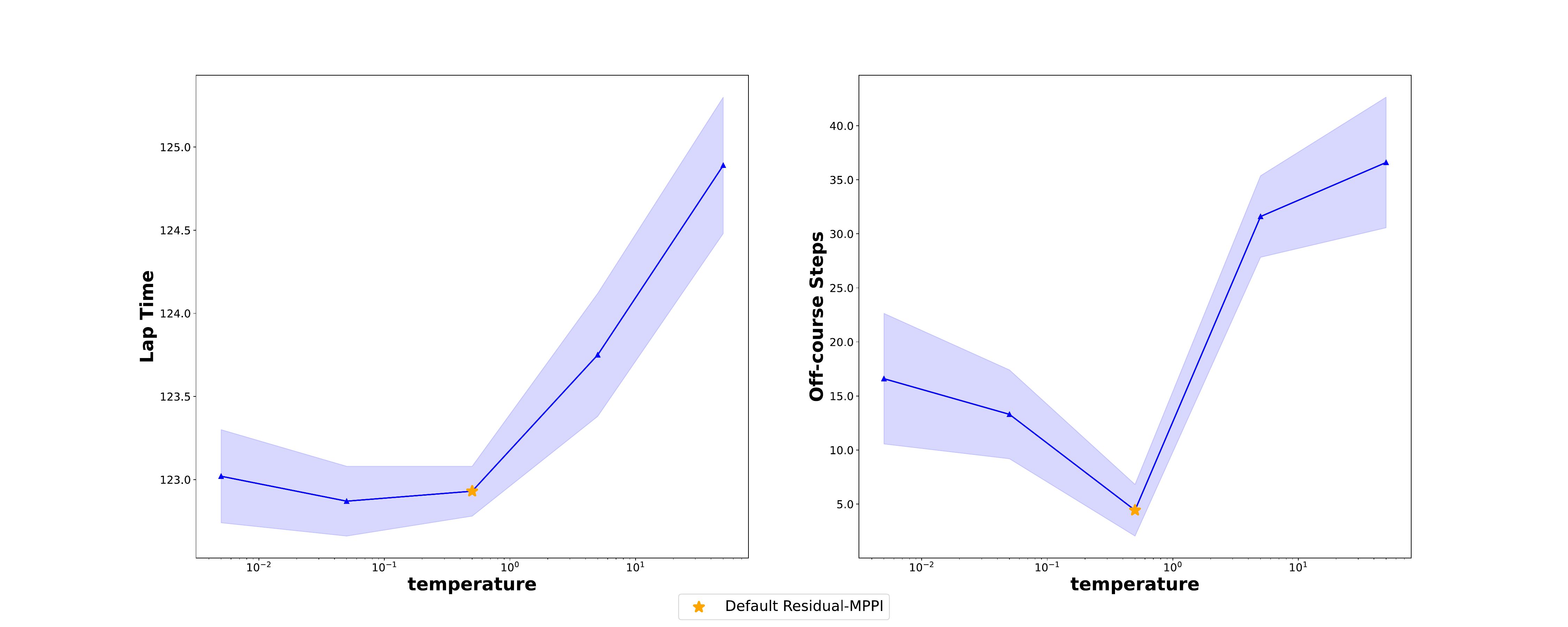}
    \caption{Ablation Study of Temperature in GTS}
    \label{fig: GTSLambdaAblation}
\end{figure}

\begin{figure}[h]
    
    \centering
    \includegraphics[scale=0.2]{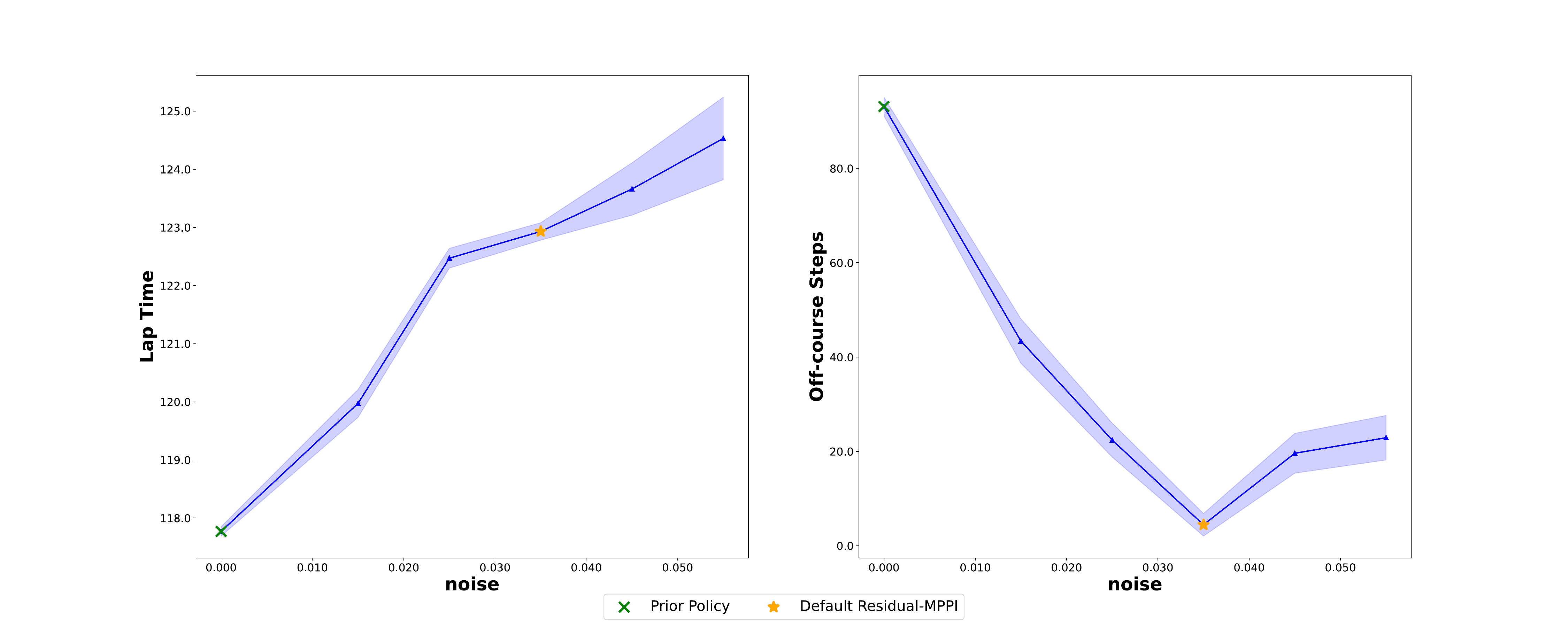}
    \caption{Ablation Study of Noise in GTS}
    \label{fig: GTSNoiseAblation}
\end{figure}

In GTS, we also conduct an ablation on $\omega^\prime$ parameter. Theorectically, $\omega^\prime$ represents the trade-off between the basic task and the add-on task, where larger $\omega^\prime$ should lead to a behavior closer to the prior policy with smaller lap time and larger off-course steps. On the other hand, oversmall $\omega^\prime$ would construct a biased MDP, leading to suboptimal performace both on laptime and off-course steps. The corresponding ablation results in GTS are shown in Figure~\ref{fig: GTSWAblation}, which also follow our analysis.
\begin{figure}[t]
    
    \centering
    \includegraphics[scale=0.2]{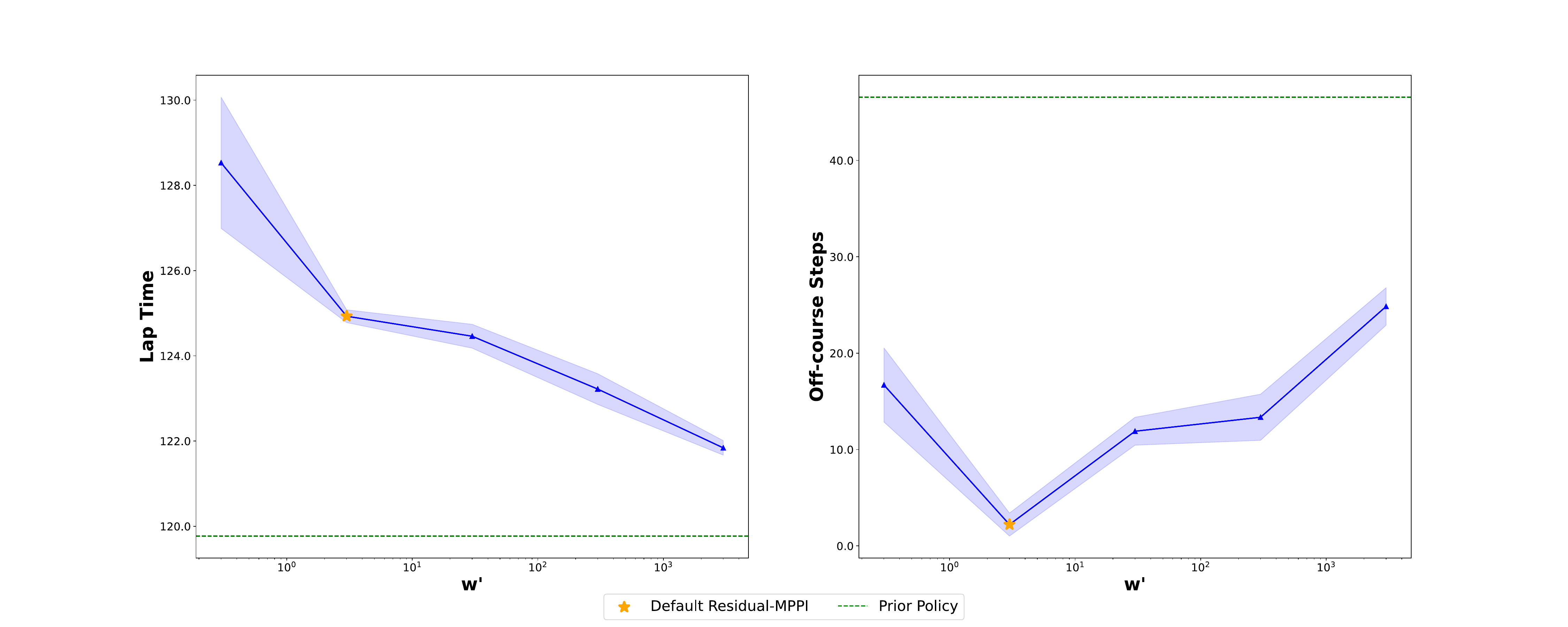}
    \caption{Ablation Study of $\omega^\prime$ in GTS}
    \label{fig: GTSWAblation}
\end{figure}

\subsection{Ablation of Dynimic Model}
To demonstrate the effectiveness of the proposed dynamics training design, we conducted the corresponding ablation study. The results in Table~\ref{tbl: Dynamics Design Ablation} show that planning based on dynamics trained with a multi-step (5-step) error is more effective in both the laptime and off-course steps metrics. Moreover, using online data to fine-tune the dynamics model also yields significant performance improvements in the one-step error approach. Regarding the exploration noise, the effect of this technique in GTS is not particularly significant.

% \begin{table}[h]
% \caption{Dynamics Design Ablation Results in GTS}\label{tbl: Dynamics Design Ablation}
% \resizebox{\textwidth}{!}{
% \begin{threeparttable}
% \begin{tabular}{@{}clccc@{}}
% \toprule
% & Policy & Zero-shot + 1step  & No-exp Zero-shot + 5step & Few-shot + 5step\\
% \midrule
% & Lap Time         & $124.18 \pm 0.35$ & $ $ &    \\ 
% & Off-course Steps & $13.2 \pm 5.33$   & $ $   &       \\

% \midrule
% & Policy           & Few-shot + 1step  & No-exp Few-shot + 5step & Zero-shot + 5step\\
% \midrule
% & Lap Time          &  &  &   \\ 
% & Off-course Steps  &    &    &   \\
% \bottomrule
% \end{tabular}
% \begin{tablenotes}
%     \small 
%     \item The evaluation results are in the form of $\mathrm{mean} \pm \mathrm{std}$ over 10 laps.
% \end{tablenotes}
% \end{threeparttable}
% }
% % \vspace{-1.5em}
% \end{table}

\begin{table}[h]
\caption{Dynamics Design Ablation Results in GTS}\label{tbl: Dynamics Design Ablation}
\resizebox{\textwidth}{!}{
\begin{threeparttable}
\begin{tabular}{cccccc}
\toprule
\multirow{2}{*}{Policy} & \multicolumn{3}{c}{Modules} & \multicolumn{2}{c}{Add-on Task} \\ \cmidrule(l){2-6}
                        & Exploration Noise     & Multi-Step Error & Finetune &  Lap Time & Off-course Steps \\ \midrule
% one-step Zero-shot MPPI & \checkmark & \ding{53}    & \ding{53} & $124.18 \pm 0.35$ & $13.2 \pm 5.33$ \\
% one-step Few-shot MPPI  & \checkmark & \ding{53}   & \checkmark & $123.27 \pm 0.13$ & $7.10 \pm 3.75$ \\

% No-exp Zero-shot MPPI   & \ding{53}  & \checkmark & \ding{53}   & $123.51 \pm 0.25$ & $10.02 \pm 3.70$  \\
% No-exp Few-shot MPPI    & \ding{53}  & \checkmark & \checkmark & $123.02 \pm 0.15$ & $4.71 \pm 2.89$  \\

% Zero-shot MPPI          & \checkmark & \checkmark & \ding{53}  & $123.34 \pm 0.22$ & $9.03 \pm 3.33$ \\
% Few-shot MPPI           & \checkmark & \checkmark & \checkmark & $\mathbf{122.93 \pm 0.14}$ & $\mathbf{4.43 \pm 2.39}$ \\

one-step Zero-shot MPPI & \checkmark & \ding{53}    & \ding{53} & $124.2 \pm 0.4$ & $13.2 \pm 5.3$ \\
one-step Few-shot MPPI  & \checkmark & \ding{53}   & \checkmark & $123.3 \pm 0.1$ & $7.1 \pm 3.8$ \\

No-exp Zero-shot MPPI   & \ding{53}  & \checkmark & \ding{53}   & $123.5 \pm 0.3$ & $10.0 \pm 3.7$  \\
No-exp Few-shot MPPI    & \ding{53}  & \checkmark & \checkmark & $123.0 \pm 0.2$ & $4.7 \pm 2.9$  \\

Zero-shot MPPI          & \checkmark & \checkmark & \ding{53}  & $123.3 \pm 0.2$ & $9.0 \pm 3.3$ \\
Few-shot MPPI           & \checkmark & \checkmark & \checkmark & $\mathbf{122.9 \pm 0.1}$ & $\mathbf{4.4 \pm 2.4}$ \\
\bottomrule
 
\end{tabular}
\begin{tablenotes}
    \small 
    \item The evaluation results are in the form of $\mathrm{mean} \pm \mathrm{std}$ over 10 laps.
\end{tablenotes}
\end{threeparttable}
}
% \vspace{-1.5em}
\end{table}

\subsection{Sim-to-sim Experiments}
\begin{table}[h]
\caption{Sim-to-Sim Experimental Results of Residual-MPPI in GTS}\label{tbl: GTS Sim2Sim Results}
\resizebox{\textwidth}{!}{
\begin{threeparttable}
\begin{tabular}{@{}cl|cc|cc@{}}
\toprule
& Policy & GT Sophy 1.0 (H.) & GT Sophy 1.0 (S.) & Zero-shot MPPI (S.) & Few-shot MPPI (S.) \\
\midrule
% & Lap Time         & $117.77 \pm 0.08$ & $116.81 \pm 0.12$ & $123.49 \pm 0.21$ & $122.56 \pm 0.26$ \\ 
% & Off-course Steps & $93.13 \pm 1.98$  & $131.50 \pm 2.75$ & $8.27 \pm 3.62$ & $3.93 \pm 2.86$ \\
& Lap Time         & $117.7 \pm 0.1$ & $116.8 \pm 0.1$ & $123.5 \pm 0.2$ & $122.6 \pm 0.3$ \\ 
& Off-course Steps & $93.1 \pm 2.0$  & $131.5 \pm 2.7$ & $8.3 \pm 3.6$ & $3.9 \pm 2.9$ \\
\bottomrule
\end{tabular}
\begin{tablenotes}
    \small 
    \item The evaluation results are in the form of $\mathrm{mean} \pm \mathrm{std}$ over 10 laps.
\end{tablenotes}
\end{threeparttable}
}
% \vspace{-1.5em}
\end{table}

To motivate future sim-to-real deployments, we designed a preliminary sim-to-sim experiment by replacing the test vehicle's tires from Race-Hard (H.) to Race-Soft (S.) to validate the proposed algorithm's robustness under suboptimal dynamics and prior policy.
Since Residual-MPPI is a model-based receding-horizon control algorithm. The dynamic replanning mechanism and dynamics model adaptation could potentially enable robust and adaptive domain transfer under tire dynamics discrepancy. The experimental setup and parameter selection are consistent with those in Sec.~\ref{sec: gts baselines}.

In sim-to-sim experiments, the domain gap brought by the tire leads to massively increased off-course steps in prior policy, which may lead to severe accidents in sim-to-real applications. In contrast, Residual-MPPI could still drive the car safely on course with minor speed loss, despite the suboptimality in prior policy and learned dynamics. Furthermore, as it shown in the Few-shot MPPI (S.) results, Residual-MPPI could serve as a safe starting point for data collection, policy finetuning, and possible future sim-to-real applications.

\newpage
\subsection{Comparison with Prompt-DT}
As discussed in the related work, Prompt-DT tries to solve the few-shot adaptation problem. Some readers may get interested in the differences between policy customization and few-shot adaptation. In this section, we provide a detailed comparison of Prompt-DT and Residual-MPPI both in theory and experiments.

Comparing to policy customization, few-shot adaptation seeks to adapt a policy solely to the new reward and requires full knowledge of the reward function. Moreover, few-shot adaptation scheme proposed in Prompt-DT relies on a DecisionTransformer-based prior policy trained on multi-task expert demonstration data via offline RL. 

To make the comparison as fair as possible, we bring Prompt-DT to a setting closer to the policy customization. We select the ANT-dir environment and test Prompt-DT with the official implementation and its training parameters. Similar to the Ant environment in our original experiments, we define the basic reward as the progress along the x-axis, and the add-on reward as the progress along the y-axis. 

First, we train a Prompt-DT model through multi-task offline RL as the prior policy. The multi-task demonstration data consists of expert trajectories walking along various directions (see \textit{Prompt-DT Basic Task Demo} in Table~\ref{tbl: Prompt_DT Prior} for the rewards of the demo trajectories provided for the basic task). Prompt-DT can be directed to solve the basic task, by conditioning on a trajectory prompt that is 1) sampled from the basic task demo, and 2) labeled with the basic reward function (see \textit{Prompt-DT on Basic Task} in Table~\ref{tbl: Prompt_DT Prior} for its performance). 

We construct two types of prompts to adapt Prompt-DT to the full task defined by the sum of basic and add-on rewards:

- \textbf{Expert Prompt}: the trajectory prompt that is 1) collected with an \textit{expert policy} trained on the full task, and 2) labeled with the \textit{total reward} function. This prompt follows the original Prompt-DT setting. However, it has priviledged access to an expert policy and the total reward function, which are not given in policy customization. 

- \textbf{Prior Prompt}: the trajectory prompt that is 1) sampled from the basic task demo, and 2) labeled with the add-on reward function. It is the modification suggested by the reviewer to bring Prompt-DT closer to our setting. 

The performance of the customized policies is reported in Table~\ref{tbl: Prompt_DT Customized}. If provided with the prior prompt, Prompt-DT can hardly customize itself to the full task, with minor improvements on the add-on task. Moreover, even when provided with the expert prompt, Prompt-DT fails to strike a good trade-off----it sacrifices too much on the basic reward to get a higher add-on reward. The total reward is even lower than the case with the prior prompt. In contrast, Residua-MPPI customizes the prior policy to achieve a higher add-on reward with minor degradation on the basic task, leading to a much higher total reward.

\begin{table}[h]
\caption{Evaluation of Prior Polices in Residual-MPPI and Prompt-DT}\label{tbl: Prompt_DT Prior}
\resizebox{\textwidth}{!}{
\begin{threeparttable}
\begin{tabular}{cccc}
\toprule
Prior Policy & Total Reward & Basic Reward & Add-on Reward \\
\midrule
Residual-MPPI  Prior                     & 784.9 $\pm$ 127.4  & 799.7 $\pm$ 108.5  & -14.8 $\pm$ 58.5   \\
Prompt-DT Basic Task Demo                & 779.5 $\pm$ 109.0  & 793.0 $\pm$ 61.7   &  -13.5 $\pm$ 25.0  \\
Prompt-DT on Basic Task                  & 686.4 $\pm$ 109.0  & 691.8 $\pm$ 101.1  &  -5.4 $\pm$ 45.5   \\
\bottomrule
 
\end{tabular}
\begin{tablenotes}
    \small 
    \item The evaluation results are in the form of $\mathrm{mean} \pm \mathrm{std}$ over 500 episodes.
\end{tablenotes}
\end{threeparttable}
}
% \vspace{-1.5em}
\end{table}

\begin{table}[h]
\caption{Evaluation of Customized Polices in Residual-MPPI and Prompt-DT}\label{tbl: Prompt_DT Customized}
\resizebox{\textwidth}{!}{
\begin{threeparttable}
\begin{tabular}{cccc}
\toprule
Customized Policy & Total Reward & Basic Reward & Add-on Reward \\
\midrule
Prompt-DT (Prior Prompt)                 & 678.4 $\pm$ 126.5  & 677.0 $\pm$ 114.3  &  1.3 $\pm$ 46.3    \\
Prompt-DT (Expert Prompt)                & 626.0 $\pm$ 187.8  & 605.6 $\pm$ 184.8  &  20.4 $\pm$ 55.5   \\
Residual-MPPI                            & 872.0 $\pm$ 69.0   & 774.8 $\pm$ 41.8   & 97.1 $\pm$ 48.5    \\
\bottomrule
 
\end{tabular}
\begin{tablenotes}
    \small 
    \item The evaluation results are in the form of $\mathrm{mean} \pm \mathrm{std}$ over 500 episodes.
\end{tablenotes}
\end{threeparttable}
}
% \vspace{-1.5em}
\end{table}

\end{document}